\newcommand{\tb}{{\underline{t}}}
\newcommand{\Lip}{{\rm Lip}}
\newcommand{\deff}{d_{\rm eff}}
\newcommand{\NN}{\sfN\kern-0.27em\sfN}
\newcommand{\Pa}{\mathsf{\Theta}}
\newcommand{\const}{{\rm const}}
\title{Localized Diffusion Models}
\author{Georg A.~Gottwald\thanks{School of Mathematics and Statistics, The University of Sydney (\texttt{georg.gottwald@sydney.edu.au}).} \and Shuigen Liu\thanks{Department of Mathematics, National University of Singapore (\texttt{shuigen@u.nus.edu}, \texttt{mattxin@nus.edu.sg}).} \and Youssef Marzouk\thanks{Laboratory for Information and Decision Systems, Massachusetts Institute of Technology (\texttt{ymarz@mit.edu}).} \and Sebastian Reich\thanks{Institut für Mathematik, Universität Potsdam (\texttt{sereich@uni-potsdam.de}).} \and Xin T.~Tong\footnotemark[2]}
\date{\today}
\begin{document}
\maketitle
\begin{abstract}
Diffusion models are state-of-the-art tools for various generative tasks. 
Yet training these models involves estimating high-dimensional score functions, which in principle suffers from the curse of dimensionality. It is therefore important to understand how low-dimensional structure in the target distribution can be exploited in these models.
Here we consider \emph{locality structure}, which describes certain sparse conditional dependencies among the target random variables. Given some locality structure, the score function is effectively low-dimensional, so that it can be estimated by a localized neural network with significantly reduced sample complexity.
This observation motivates the \emph{localized diffusion model}, where a localized score matching loss is used to train the score function within a localized hypothesis space. We prove that such localization enables diffusion models to circumvent the curse of dimensionality, at the price of additional localization error. Under realistic sample size scaling, we then show both theoretically and numerically that a moderate localization radius can balance the statistical and localization errors, yielding better overall performance. 
Localized structure also facilitates parallel training, making localized diffusion models potentially more efficient for large-scale applications. 
\end{abstract}

\section{Introduction}
Over the past decade, numerous neural network (NN)-based sampling algorithms have emerged in the machine learning literature, demonstrating remarkable performance in varied applications. These methods, often referred to as generative models, include normalizing flows \cite{pmlr-v37-rezende15}, variational auto-encoders \cite{kingma2013auto}, generative adversarial networks \cite{NIPS2014_5ca3e9b1}, and diffusion models \cite{NEURIPS2019_3001ef25,NEURIPS2020_4c5bcfec,song2021scorebased}.
Among generative models, diffusion models (DMs)---a term usually referring to ``denoising diffusion probabilistic models'' (DDPMs) \cite{NEURIPS2020_4c5bcfec}---are a state-of-the-art and widely used approach. 
They have gained great popularity due to their ability to generate high-quality samples, particularly in tasks such as image synthesis \cite{NEURIPS2020_4c5bcfec,NEURIPS2021_49ad23d1}. 
A series of recent studies \cite{NEURIPS2022_8ff87c96,chen2023sampling,benton2024nearly,pmlr-v202-oko23a,pmlr-v202-chen23o,pmlr-v247-wibisono24a} theoretically justify the approximation and generalization capabilities of DMs over a broad class of target distributions. 
Understanding of their effectiveness in representing high-dimensional distributions, however, remains limited.

DMs are known to be expensive when training with high-dimensional data. 
The training sample size needs to grow exponentially with the problem dimension \cite{10.1214/aos/1176345969,pmlr-v202-oko23a}, a scaling known as the \emph{curse of dimensionality} (CoD) in the literature. 
Various attempts have been made to avoid this scaling by leveraging low-dimensional structure within the target distribution. 
The \emph{manifold hypothesis} \cite{MR3522608}, which postulates that the data lie on a low-dimensional manifold, is often invoked in these settings.  
For such data, \cite{pmlr-v202-oko23a,pmlr-v202-chen23o,pmlr-v238-tang24a,azangulov2024convergence} show that the sample complexity of DMs depends on the dimension of the manifold rather than the ambient dimension, and that DMs can thus avoid the CoD with appropriate NN structure. 
There are also studies considering Gaussian mixtures \cite{NEURIPS2023_3ec077b4,pmlr-v235-zhang24cn,gatmiry2024learning} to avoid the CoD. 
In both manifold hypothesis and the Gaussian mixture models, although the ambient space is high-dimensional, there is a low-dimensional latent structure that effectively characterizes the target distribution. 

While these concepts of low ``effective dimension'' cover many applications, there are still important cases left open. 
One large class of high-dimensional distributions are those with \emph{locality structure} \cite{MR373208,MR4111677,24M1694781,cui2025stein}. 
We say a joint distribution has locality structure if each 
coordinate random variable
only has strong conditional dependence on a small subset of the other 
coordinates.
An illustrative example is the Ginzburg--Landau model from statistical physics \cite{landau1980statistical}, where $d$ particles in one-dimensional configurations follow the distribution 
\begin{equation}
    p(x_1,\dots,x_d) = \frac{1}{Z} \exp \Brac{ \sum_{j=1}^d V(x_j) + \sum_{j=1}^{d-1} W(x_j,x_{j+1}) }.  
\end{equation}
In this model, each particle, denoted by $x_j$, interacts directly only with its neighbors $x_{j\pm 1}$. 
Such sparse dependence structure arises naturally in spatial models, and has been successfully applied in various fields such as spatial statistics \cite{MR373208}, data assimilation \cite{MR3375889}, quantum mechanics \cite{PhysRevLett.76.3168}, and sampling \cite{MR4111677}. 
We refer to \cite{cui2025stein} for a detailed review on locality structure. 

An important property of distributions with locality structure, or \emph{localized distributions}, is that their score functions are effectively low-dimensional \cite{MR4111677,cui2025stein}. 
Due to conditional independence, the $j$th component of the score function, $s_j(x) = \nabla_j \log p(x)$, depends only on $x_{\mcN_j}$, where $x_{\mcN_j}$ is the component conditionally dependent on $x_j$. 
If the conditional dependencies are sparse, the dimension of $x_{\mcN_j}$ is much smaller than the ambient dimension $d$, so that the score function can be regarded as a collection of low-dimensional functions $\{s_j\}_{j\in[d]}$. This suggests that learning the score functions of localized distributions does not suffer from the CoD. 

Motivated by this, we propose the \emph{localized diffusion model} (LDM), which embeds the locality structure into the hypothesis space of the score function, reducing a high-dimensional score matching problem to a low-dimensional one. 
With small effective dimension, the statistical error of score estimation is significantly reduced. 
On the other hand, localizing the hypothesis space introduces additional localization error. 
By a complete approximation and generalization analysis, we show that by adjusting the localization radius, one can balance the tradeoff between the statistical error and the localization error to achieve smaller overall error. 
This can be interpreted as a tradeoff between variance and bias. Such tradeoff is validated by numerical experiments on high-dimensional time series data. 
Finally, we find that LDM can be interpreted as a collection of diffusion models on low-dimensional marginals. That is, we construct the samplers by combining local samplers for the marginals of the localized distributions. 
This allows LDM to be trained in parallel, which is practically important for large-scale applications. 

The paper is organized as follows. In \Cref{Sec:DMnLoc}, we review diffusion models and the locality structure, and show that the locality structure is approximately preserved in the forward diffusion process. In \Cref{Sec:LocDM}, we introduce the localized diffusion model and analyze its approximation and statistical error. In \Cref{Sec:NumExp}, we present numerical experiments to validate our theoretical results.

\subsection{Related Work}

\subsubsection{Analysis of Diffusion Models}
Since the introduction of DMs \cite{NEURIPS2019_3001ef25,NEURIPS2020_4c5bcfec,song2021scorebased}, there has been a surge of interest in understanding their theoretical properties. 
Our work is built on two main lines of research: the convergence of DMs and the statistical analysis of DMs. 
A comprehensive review of all related work is beyond the scope of this paper; we refer to \cite{chen2024overview,gatmiry2024learning} for an in-depth overview. 

The convergence of DMs considers error bounds of the sampled distribution given the learned score function. 
Early work \cite{NEURIPS2022_8ff87c96} provides a TV guarantee by assuming a log-Sobolev inequality. 
Later, by using Girsanov theorem, this condition is relaxed to bounded moment conditions \cite{chen2023sampling,pmlr-v202-chen23q}.
A growing body of work is trying to further relax assumptions and improve error bounds. 
For instance, \cite{benton2024nearly} proves a linear-in-dimension bound under the KL divergence, \cite{doi:10.1137/23M1613670} uses a relative score approach and derives bounds without early stopping. \cite{potaptchik2024linear} considers the manifold data, and improves the bound of the discretization error to scale linearly with the manifold dimension. 

The statistical analysis of DMs essentially studies the sample complexity of estimating the score function. 
\cite{pmlr-v202-oko23a,pmlr-v247-wibisono24a} prove that the diffusion model reaches the minimax rate for distribution estimation. 
To avoid the CoD, \cite{pmlr-v202-oko23a,pmlr-v202-chen23o} considers linear subspace data, and later \cite{pmlr-v238-tang24a,azangulov2024convergence} extends it to general manifold data. 
Recently, \cite{yakovlev2025generalization} relaxes the manifold assumption, and improves the ambient dimension dependence in the generalization bound. 
Other types of low-dimensional structures are also considered. 
\cite{NEURIPS2023_3ec077b4} considers certain Gaussian mixtures, and shows that the sample complexity does not depend exponentially on the dimension. \cite{gatmiry2024learning} further extends it to general Gaussian mixtures with edited diffusion models. 

We mention that a recent work \cite{10857317} considers similar settings as ours. They apply the diffusion models for high-dimensional graphical models. 
Inspired by variational inference denoising algorithms, they design a residual network to efficiently approximate the score function, and prove that its sample complexity does not suffer from CoD. 
However, their result depends on an explicit solution of the denoising algorithms, and only applies to Ising model-type distributions. 
The method we propose in this paper applies to general high-dimensional graphical models. 

Locality in DMs has also been studied from a different perspective, i.e., the mechanism of creativity or generalization capability in image generation task. 
\cite{kamb2024analytic,niedoba2024towards} suggest that the inductive bias introduced by localized denoisers in neural architectures such as convolutional neural networks may be the key for DMs to generalize rather than memorize. 
Although both works derive an empirical localized denoiser, it is introduced to explain the emergence of creativity, rather than to be used to improve sample complexity. 
Meanwhile, they focus on empirical studies rather than rigorous numerical and statistical analysis.

\subsubsection{Localized Sampler}
In recent years, there has been a fast growing interest in sampling methods that leverage locality structures \cite{pmlr-v80-zhuo18a,MR3901708,MR4111677,gottwald2024localized}. These localized samplers follow the general strategy to build samplers by combining local samplers for the marginals. 
\cite{MR3901708} propose to apply the localization technique in Markov chain Monte Carlo (MCMC) and introduces a localized Metropolis-within-Gibbs sampler. 
\cite{MR4111677} extends this idea and develops the MALA-within-Gibbs sampler, which is proven to admit a dimension independent convergence rate. 
Beyond MCMC, \cite{pmlr-v80-zhuo18a} proposes Message Passing Stein Variational Gradient Descent. It finds the descent direction coordinate-wisely, and reduces the degeneracy issue of kernel methods in high dimensions. 
\cite{gottwald2024localized} proposes a localized version of the Schr\"odinger bridge (SB) sampler \cite{rsta.2024.0332}, which replaces a single high-dimensional SB problem by $d$ low-dimensional SB problems, avoiding the exponential dependence of the sample complexity on the dimension.

\subsection{Notations}

\begin{itemize}
    \item Sets. Denote $[n] = \{1,2,\dots,n\}$, and the cardinality of a set $A$ as $|A|$. Given $x\in \mR^n$ and $A\subset[n]$, denote $x_{A}$ as the subvector of $x$ with components' indices from $A$.
    \item Norms. For a vector $x\in\mR^d$, denote $\norm{x}$ as its $\ell_2$-norm. For a matrix $A\in\mR^{m\times n}$, denote $\norm{A} = \sup_{x\neq 0} \frac{\norm{Ax}}{\norm{x}}$ as the $2$-matrix norm. For a probability distribution $p$ and a function $f$, denote $\norm{f}_{L^2(p)} = \Brac{\int f^2(x) p(x) \mdd x}^{1/2}$ as the weighted $L^2$-norm. 
    \item Probability. Denote $\Law(X)$ as the distribution of a random variable $X$. Denote the covaraince matrix of $X,Y$ as $\Cov_p(X,Y) := \mE_p [ \Brac{X-\mE_p[X]} \Brac{Y-\mE_p[Y]}\matT ]$. Denote $\GN(\mu,\Sigma)$ as the Gaussian distribution with mean $\mu$ and covariance $\Sigma$. Denote $X\ci Y \mid Z$ if $X$ is independent of $Y$ given $Z$; i.e.~ $\mP(X,Y|Z) = \mP(X|Z)\mP(Y|Z)$. 
\end{itemize}

\section{Diffusion Models and Localized Distributions}  \label{Sec:DMnLoc}
\subsection{Diffusion Models}
Diffusion models operate by simulating a process that gradually transforms a simple initial distribution, often Gaussian noise, into a complex target distribution, which represents the data of interest. 
The core formulation involves two processes: a forward Ornstein--Uhlenbeck (OU) diffusion process which evolves data samples from the data distribution $p_0$ to noisy samples drawn from a Gaussian distribution, and a reverse diffusion process that learns to progressively denoise the samples and effectively reconstruct the original data distribution. 

Consider a forward OU process $(X_t)_{t\in[0,T]}$ that is intialized with the target distribution $p_0$ and follows the  process, i.e., 
\begin{equation}    \label{eqn:OU}
    \mdd X_t = - X_t \mdd t + \sqrt{2} \mdd W_t, \quad X_0 \sim p_0.
\end{equation}
Denote its reverse process as $(Y_t)_{t\in[0,T]}$ s.t.~$Y_t = X_{T-t}$. Under mild conditions, $Y_t$ follows the reverse SDE \cite{song2021scorebased}
\begin{equation}    \label{eqn:OUrev}
    \mdd Y_t = \Brac{ Y_t + 2 \nabla \log p_{T-t} (Y_t) } \mdd t + \sqrt{2} \mdd W_t, \quad Y_0 \sim p_T,
\end{equation}
where we denote $p_t = \Law(X_t)$. The target distribution $p_0$ can then be sampled by first sampling $Y_0 \sim p_T$ and then evolving $Y_t$ according to \eqref{eqn:OUrev} to obtain a sample $Y_T \sim p_0$. 

To implement the above scheme, several approximations are needed: 
\begin{itemize}
    \item Score estimation. The score function $s(x,t) := \nabla \log p_t (x)$ is not accessible, and needs to be estimated from the data via the denoising score matching scheme \cite{6795935,NEURIPS2019_3001ef25,NEURIPS2020_4c5bcfec}
    \begin{equation}
        \widehat{s} = \argmin_{s_\theta} \mcL(s_\theta),
    \end{equation}
    \begin{equation}    \label{eqn:DSM}
        \mcL(s_\theta) := \int_0^T \mE_{x_0 \sim p_0} \Rectbrac{ \mE_{x_t\sim p_{t|0}(x_t|x_0)} \Rectbrac{ \norm{ s_\theta(x_t,t) - \nabla_{x_t} \log p_{t|0}(x_t|x_0) }^2 } } \mdd t.
    \end{equation}
    In the sampling process, the true score $\nabla \log p_{T-t}(Y_t)$ in \eqref{eqn:OUrev} is approximated by the estimated score $\widehat{s}(Y_t,T-t)$. 
    \item Approximation of $p_T$. The initial distribution $p_T$ in the reverse process is intractable. But since the OU process converges exponentially to $p_\infty = \GN(0,I)$, we can approximate $p_T$ by $\GN(0,I)$ in \eqref{eqn:OUrev}, i.e., $Y_0$ is drawn from $\GN(0,I)$. 
    \item Early stopping. The reverse process is usually stopped at $t = T - \tb$ for some small $\tb > 0$ to avoid potential blow up of the score function $s_t$ as $t \to 0$. The early stopped samples satisfy $Y_{T-\tb} \sim p_\tb$, which should be close to $p_0$ when $\tb$ is small.
    \item Time discretization. The Euler-Maruyama scheme is used to discretize \eqref{eqn:OUrev}. Pick time steps $0 = t_0 < t_1 < \cdots < t_N = T - \tb $, and evolve $n=0,1,\dots,N-1$ by
    \begin{equation}
        Y_{t_{n+1}} = Y_{t_n} + \Brac{ Y_{t_n} + 2 \widehat{s}(Y_{t_n},T-t_n) } \Delta t_n + \sqrt{2 \Delta t_n } \xi_n,  
    \end{equation}
    where $\Delta t_n = t_{n+1} - t_n$ and $\xi_n \sim \GN(0,I)$. Design of the time steps (the schedule) is crucial for the empirical performance of the sampling process. 
\end{itemize}

Note the OU process admits an explicit transition kernel 
\begin{equation}    \label{eqn:OUkernel}
    p_{t|0}(x_t|x_0) = \GN(x_t; \alpha_t x_0, \sigma_t^2 I), \quad \alpha_t := \mee^{-t}, \quad \sigma_t := \sqrt{1 - \mee^{-2t}}. 
\end{equation}
So that $\nabla_{x_t} \log p_{t|0} (x_t|x_0) = - \sigma_t^{-2}(x_t-\alpha_t x_0)$, and $p_{t|0}(x_t|x_0)$ can be realized as 
\begin{equation}
    x_t = \alpha_t x_0 + \sigma_t \epsilon_t, \quad \epsilon_t \sim \GN(0,I).
\end{equation}
Therefore, the denoising score matching loss in \eqref{eqn:DSM} can be written as 
\begin{equation}    \label{eqn:DSM2}
    \mcL(s_\theta) = \int_\tb^T \mE_{x_0 \sim p_0} \mE_{\epsilon_t \sim \GN(0,I)} \Rectbrac{ \norm{ s_\theta(\alpha_t x_0 + \sigma_t \epsilon_t, t) + \sigma_t^{-1} \epsilon_t }^2 } \mdd t,
\end{equation}
where we involved the early stopping truncation. The above loss provides a convenient form for implementation \cite{NEURIPS2020_4c5bcfec}.

\subsection{Locality Structure}
We will use the undirected graphical model \cite{MR2493908,MR2778120}, also known as Markov random field, to describe the locality structure. 
In this model, the conditional dependencies of a collection of random variables are encoded in the underlying dependency graph. 
So that the sparsity of the graph can be used to characterize the locality structure in the joint distribution of these random variables. 
We will define the localized and approximately localized distributions based on the dependency graph.

\subsubsection{Sparse Graphical Models}
Following \cite{cui2025stein}, consider an undirected graph $G=(V,E)$ and an associated random variable 
\begin{equation}
    X = (X_i)_{i\in V} \in \mR^d, \quad X_i \in \mR^{d_i}, \quad d = \sum_{i\in V} d_i.
\end{equation}
Here we assume that the dimension of each component $d_i$ is small, but the total dimension $d$ is large. We say $X$ has \textit{dependency graph} $G$, if for any nonadjacent vertices $i,j\in V$, $X_i, X_j$ are conditionally independent given the rest of the components $(X_k)_{k\neq i,j}$, i.e., 
\begin{equation}    \label{eqn:CI}
    X_i \ci X_j \mid (X_k)_{k\neq i,j}. 
\end{equation}
$X$ is called a sparse graphical model if the dependency graph $G$ is sparse, which essentially encodes the sparse local dependencies in $X$. 
The following equivalent characterization \cite{MR4111677} of the sparse graphical models will be crucial. Let $p = \Law(X)$. If $p(x)$ is twice differentiable, then \eqref{eqn:CI} equivalent to 
\begin{equation}    \label{eqn:SparseHess}
    \forall \text{ nonadjacent } i,j \in V  ~\St~ \nabla_{ij}^2 \log p(x) = 0. 
\end{equation}
Let $b = |V|$, and attach each vertex in $V$ with a unique index $j\in [b]$. Denote 
\begin{equation}
    \mcN_j := \{i \in V: (i,j)\in E\}
\end{equation}
as the neighboring vertices of $j$. For simplicity, we require that $E$ includes all the self-loops in $G$; i.e.~$j \in \mcN_j$. We further denote the extended neighborhood of $j$ as 
\begin{equation}    \label{eqn:Nr}
    \mcN_j^r = \{ i \in V: \sfd_G(i,j) \leq r \},
\end{equation}
where $\sfd_G(i,j)$ is the graph path distance between $i,j\in V$, i.e., 
\begin{equation}    \label{eqn:GraphDist}
    \sfd_G(i,j) = \min \{ n \geq 0: \exists~\text{path of length } n \text{ from } i \text{ to } j \}.
\end{equation}

\subsubsection{Localized Distributions}
Now we define localized and approximately localized distributions. The former is precisely the sparse graphical models, and the latter is a relaxation based on \eqref{eqn:SparseHess}, which allows exponentially small long-range dependencies. 

\begin{defn}    \label{def:LocDist}
A distribution $p$ is called localized w.r.t.~an undirected graph $G$ if it satisfies \eqref{eqn:CI}. A distribution $p$ is called approximately localized w.r.t.~$G$, if there exists dimensional independent constants $c_p,C_p>0$ such that 
\begin{equation}    \label{eqn:wLocDist}
    \norm{ \nabla_{ij}^2 \log p }_\infty \leq C_p \exp \Brac{-c_p \sfd_G(i,j)}. 
\end{equation}
Here $\norm{\cdot}_\infty$ denotes the $L^\infty$-norm, and $\sfd_G(i,j)$ is the graph distance \eqref{eqn:GraphDist}. 
\end{defn}

For localized distributions, consider the $j$-th component of its score function 
\begin{equation}
    s_j(x) = \nabla_j \log p(x).
\end{equation}
Note that it is only a function of $x_{\mcN_j}$, since by \eqref{eqn:SparseHess}, for any $i\notin \mcN_j$, 
\begin{equation}
    \nabla_i s_j(x) = \nabla^2_{ij} \log p(x) = 0.
\end{equation}
For sparse graph $G$, $|\mcN_j|\ll |V|$, so that $s_j$ is essentially a low-dimensional function, which implies that estimation of $s_j$ does not suffer from the curse of dimensionality. This motivates us to leverage the locality structure in the hypothesis space of the score function, and to localize the score matching procedure. The detailed methods will be discussed in \Cref{Sec:LocDM}.

However, the low-dimensionality in the score functions only holds for localized distributions. For approximately localized distributions, the score functions can only be approximated by low-dimensional functions. To improve the approximation accuracy, we can use the expanded neighborhood \eqref{eqn:Nr} for the approximate scores, i.e., 
\begin{equation}
    s_j(x) \approx \widehat{s}_{\theta,j}(x_{\mcN_j^r}).
\end{equation}
Here $r$ is the radius of the neighborhood, and can be tuned to balance the approximation accuracy and the sample complexity. Note by \eqref{eqn:wLocDist}, the approximation error decays exponentially with the radius $r$, while the dimension of $\widehat{s}_{\theta,j}$ only grows polynomially with $r$. \Cref{Sec:LocDM} will provide a detailed analysis of the approximation error and the tradeoff in the choices of $r$.

\subsection{Locality Structure in Diffusion Models}   \label{Sec:LocInDM}
We show in this section that the locality structure is preserved in the forward OU process, which lays the foundation for the localized score matching in diffusion models. 

The explicit transition kernel \eqref{eqn:OUkernel} of the OU process implies that $p_t$ has an explicit density 
\begin{equation}
    p_t(x_t) = \int \GN ( x_t; \alpha_t x_0, \sigma_t^2 I ) p_0(x_0) \mdd x_0. 
\end{equation}
$p_t$ can be viewed as an interpolation between $p_0$ and $p_\infty = \GN(0,I)$. Suppose $p_0$ is a localized distribution w.r.t.~an undirected graph $G$. 
It is obvious that $p_\infty$ is localized, but their interpolation $p_t$ may not remain strictly localized. However, $p_t$ is still approximately localized, as proved in the following theorem. 

\begin{thm}     \label{thm:wLocDist}
Suppose $p_0$ is localized w.r.t.~an undirected graph $G$. Assume additionally that $p_0$ is log-concave and smooth, i.e., $\exists 0<m\leq M <\infty$ s.t.~$m I \preceq - \nabla^2 \log p_0(x) \preceq M I$. Then for any $t\in (0,T]$, $p_t$ is approximately localized w.r.t.~$G$. Specifically, 
\begin{equation} 
    \norm{ \nabla_{ij}^2 \log p_t }_\infty \leq \frac{\alpha_t^2}{ \sigma_t^2 \Brac{ m \sigma_t^2 +\alpha_t^2 } } \Brac{ 1 - \frac{ m \sigma_t^2 + \alpha_t^2 }{ M \sigma_t^2 + \alpha_t^2 } }^{\sfd_G(i,j)}. 
\end{equation}
Here $\alpha_t = \mee^{-t}$ and $\sigma_t = \sqrt{1 - \mee^{-2t}}$ (cf.~\eqref{eqn:OUkernel}), and $\sfd_G(i,j)$ is the graph distance \eqref{eqn:GraphDist}.
\end{thm}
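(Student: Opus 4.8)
The backbone of the argument is the standard posterior‑covariance identity for Gaussian‑smoothed densities. Since $p_t(x_t)=\int \GN(x_t;\alpha_t x_0,\sigma_t^2 I)\,p_0(x_0)\,\mdd x_0$, differentiating $\log p_t$ once yields the Tweedie/Miyasawa relation $\nabla\log p_t(x_t)=-\sigma_t^{-2}x_t+\alpha_t\sigma_t^{-2}\,\mE\Rectbrac{X_0\mid X_t=x_t}$, and differentiating once more gives
\begin{equation}\label{eqn:postcov}
\nabla^2\log p_t(x_t)=-\frac{1}{\sigma_t^2}\,I+\frac{\alpha_t^2}{\sigma_t^4}\,\Cov\Brac{X_0\mid X_t=x_t}.
\end{equation}
For $i\neq j$ the identity term drops out, so it suffices to bound the operator norm of the $(i,j)$ block of the conditional covariance matrix, uniformly over $x_t$.

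Next I would pin down the conditional law. By Bayes' rule, $X_0\mid X_t=x_t$ has density $q_{x_t}$ proportional to $p_0(x_0)\exp\Brac{-\tfrac{\alpha_t^2}{2\sigma_t^2}\norm{x_0-\alpha_t^{-1}x_t}^2}$; that is, it is $p_0$ tilted by an isotropic, hence separable, quadratic. Two consequences follow at once. First, its log‑Hessian is $\nabla^2\log p_0(x_0)-\tfrac{\alpha_t^2}{\sigma_t^2}I$, so under $mI\preceq-\nabla^2\log p_0\preceq MI$ the law $q_{x_t}$ is strongly log‑concave with
\[
\kappa_t I\preceq-\nabla^2\log q_{x_t}\preceq\Lambda_t I,\qquad
\kappa_t:=\frac{m\sigma_t^2+\alpha_t^2}{\sigma_t^2},\quad \Lambda_t:=\frac{M\sigma_t^2+\alpha_t^2}{\sigma_t^2}.
\]
Second, the added quadratic has a diagonal Hessian, so it creates no new edges: $\nabla^2_{ij}\log q_{x_t}=\nabla^2_{ij}\log p_0=0$ for nonadjacent $i,j$, i.e.\ $q_{x_t}$ is again localized w.r.t.\ $G$.

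The core estimate is then a correlation‑decay bound: for any distribution $q$ localized w.r.t.\ $G$ with $\kappa I\preceq-\nabla^2\log q\preceq\Lambda I$, one has $\norm{\Cov_q(X_i,X_j)}\le \kappa^{-1}\Brac{1-\kappa/\Lambda}^{\sfd_G(i,j)}$. I would obtain this from the Helffer--Sjöstrand representation $\Cov_q(X_i,X_j)=\mE_q\Rectbrac{\langle e_i,\mathcal{A}^{-1}e_j\rangle}$, where $\mathcal{A}=\mathcal{G}\otimes I-\nabla^2\log q$ is the Witten Laplacian on vector fields, $\mathcal{G}\succeq0$ being the Witten Laplacian on functions associated with $q$ (acting componentwise). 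Writing $-\nabla^2\log q=\Lambda I-W$ with $0\preceq W\preceq(\Lambda-\kappa)I$, the Neumann series
\[
\mathcal{A}^{-1}=\sum_{k\ge0}\Rectbrac{\Brac{\mathcal{G}+\Lambda}^{-1}\!\otimes I\cdot W}^{k}\Brac{\mathcal{G}+\Lambda}^{-1}\!\otimes I
\]
converges in operator norm because $\norm{\Brac{\mathcal{G}+\Lambda}^{-1}\!\otimes I\cdot W}\le(\Lambda-\kappa)/\Lambda=1-\kappa/\Lambda<1$. Now $\Brac{\mathcal{G}+\Lambda}^{-1}\!\otimes I$ acts blockwise (it never couples distinct blocks $\mR^{d_i},\mR^{d_j}$) while the multiplication operator $W$ inherits the sparsity of $\nabla^2\log q$, so the $(i,j)$ block of the $k$‑th summand is a sum over length‑$k$ walks $i\to\cdots\to j$ in $G$ and vanishes whenever $k<\sfd_G(i,j)$; summing the geometric tail from $k=\sfd_G(i,j)$ gives the claimed bound (interpreted as $0$ when $i,j$ lie in different components). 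Applying this with $q=q_{x_t}$, $\kappa=\kappa_t$, $\Lambda=\Lambda_t$ and inserting into \eqref{eqn:postcov}, the prefactor $\tfrac{\alpha_t^2}{\sigma_t^4}\kappa_t^{-1}$ collapses to $\tfrac{\alpha_t^2}{\sigma_t^2(m\sigma_t^2+\alpha_t^2)}$ and $1-\kappa_t/\Lambda_t$ to $1-\tfrac{m\sigma_t^2+\alpha_t^2}{M\sigma_t^2+\alpha_t^2}$, producing exactly the stated bound; since it is independent of $x_t$, it bounds $\norm{\nabla^2_{ij}\log p_t}_\infty$.

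\textbf{Main obstacle.} The delicate part is the correlation‑decay lemma, and specifically making the Helffer--Sjöstrand identity and the operator Neumann expansion rigorous: self‑adjointness and invertibility of the Witten Laplacian on $1$‑forms, the admissibility of the unbounded coordinate functions $x_i$ as test functions (which uses strong log‑concavity to control moments and may need a truncation/limiting argument), convergence and termwise manipulation of the series in $L^2(q;\mR^d)$, and the block/vector‑valued bookkeeping since each $X_i\in\mR^{d_i}$ and ``$\Cov_q(X_i,X_j)$'' is a $d_i\times d_j$ matrix. A self‑contained alternative that sidesteps the functional‑analytic machinery is a dynamical proof: run the overdamped Langevin diffusion with invariant law $q$, differentiate the two‑point correlations along the flow, and close a Grönwall‑type hierarchy of differential inequalities indexed by graph distance, in which $1-\kappa/\Lambda$ emerges as the per‑unit‑distance contraction factor.
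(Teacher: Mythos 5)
Your reduction is the same as the paper's: differentiate the Gaussian-smoothed density to get $\nabla^2_{ij}\log p_t(x_t)=\alpha_t^2\sigma_t^{-4}\Cov\Brac{X_{0,i},X_{0,j}\mid X_t=x_t}$ (plus a diagonal term that you correctly note is irrelevant off the diagonal), observe that the conditional law is $p_0$ tilted by an isotropic quadratic, hence still localized w.r.t.\ $G$ and strongly log-concave with constants $m+\alpha_t^2/\sigma_t^2$ and $M+\alpha_t^2/\sigma_t^2$, and conclude via a correlation-decay estimate for localized log-concave measures. Where you diverge is in how that key estimate is proved. The paper (Proposition \ref{prop:CorrExpDecay} and Lemma \ref{lem:GradEsti}) solves the Poisson/Stein equation for the Langevin generator, represents its solution probabilistically, and bounds the Jacobian blocks $\nabla_j X^x_{t,i}$ of the Langevin flow by $\mee^{-Mt}\sum_{k\geq \sfd_G(i,j)}t^k(M-m)^k/k!$, which integrates in time to exactly $\frac1m(1-\frac mM)^{\sfd_G(i,j)}$; your proposal instead uses the Helffer--Sj\"ostrand covariance representation and a Neumann-series/random-walk expansion of the Witten Laplacian on one-forms, where block sparsity of the Hessian kills all walks shorter than $\sfd_G(i,j)$ and the geometric tail yields the identical constant $\kappa_t^{-1}(1-\kappa_t/\Lambda_t)^{\sfd_G(i,j)}$. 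Both routes are legitimate and give the same bound; the dynamical route buys an elementary, pathwise argument whose only nontrivial input is the Jacobian decay lemma, while your operator-theoretic route makes the walk-counting mechanism behind the exponential decay very transparent but, as you acknowledge, requires care with self-adjointness and invertibility of the one-form Witten Laplacian, the admissibility of unbounded (linear) test functions, and the block bookkeeping---details that would need to be carried out (or cited) to make the lemma rigorous. Amusingly, the ``self-contained alternative'' you sketch as a fallback (differentiating correlations along the Langevin flow with a per-distance contraction factor) is essentially the paper's actual proof.
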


The proof can be found in \Cref{App:wLocDist}. The first step is to show that 
\begin{equation}
    \nabla_{ij}^2 \log p_t(x_t) = \alpha_t^2 \sigma_t^{-4} \Cov_{p_{0|t}(x_0|x_t)} \Brac{ x_{0,i}, x_{0,j} }. 
\end{equation}
The bound then directly follows \Cref{prop:CorrExpDecay} below, which establishes the exponential decay of correlations between $x_i, x_j$ w.r.t.~their graph distance $\sfd_G(i,j)$ for localized distributions. This is a ubiquitous property for distributions with locality structure \cite{PhysRevLett.76.3168,MR3375889,cui2025stein}. 

\begin{rem}
While \Cref{thm:wLocDist} assumes log-concavity to apply \Cref{prop:CorrExpDecay}, the exponential decay of correlations is ubiquitous and does not inherently depend on log-concavity. The assumption is adopted here for simplicity and to derive an explicit quantitative bound. 

\end{rem}

We now state the key proposition: 
\begin{pro}     \label{prop:CorrExpDecay}
Suppose $p$ is localized w.r.t.~an undirected graph $G$ and is log-concave and smooth, i.e., $\exists 0<m\leq M <\infty$ s.t.~$m I \preceq - \nabla^2 \log p(x) \preceq M I$. Then for any $i,j$ and Lipschitz functions $f: \mR^{d_i} \to \mR$ and $g: \mR^{d_j} \to \mR$, it holds 
\begin{equation}
    \norme{ \Cov_{p(x)} \Brac{ f(x_i), g(x_j) } } \leq \frac{1}{m} \Brac{ 1 - \frac{m}{M} }^{\sfd_G(i,j)} \norme{ f }_\Lip \norme{g}_\Lip.
\end{equation}
\end{pro}
The proof can be found in \Cref{App:CorrExpDecay}. 

\begin{rem}
We note that the condition number $\kappa:=\frac{M}{m}$ of typical localized distributions is independent of the dimension $d$. This is in contrast to the distributions for fixed-domain models with finer resolution. The key difference is the different nature of the high-dimensionality. An illustrative example is the 1d lattice model: 
\begin{equation}
    p(x) \propto \exp \Brac{ \frac{1}{2} x\matT A x - \frac{\gamma}{2} \norm{x}^2 }, 
\end{equation}
where $x\in \mR^d$, and $x\matT Ax$ comes from a discretized Laplacian. 
\begin{itemize}
    \item Fixed-domain type. Fix the domain $[0,1]$ and take $x_k = kh $ and $h = (d+1)^{-1}$. Then 
    \begin{equation}
        - \nabla^2 \log p(x) = - A + \gamma I = \frac{1}{h^2} \begin{bmatrix}
            2 & -1 & 0 & \cdots & 0 \\
            -1 & 2 & -1 & \cdots & 0 \\
            0 & -1 & 2 & \cdots & 0 \\
            \vdots & \vdots & \vdots & \ddots & \vdots \\
            0 & 0 & 0 & \cdots & 2
        \end{bmatrix} + \gamma I.
    \end{equation}
    The condition number is thus 
    \begin{equation}
        \kappa = \frac{\gamma + 4h^{-2} \sin^2 \frac{d\pi}{2(d+1)} }{\gamma + 4h^{-2} \sin^2 \frac{\pi}{2(d+1)} } \approx \frac{\sin^2 \frac{d\pi}{2(d+1)} }{ \sin^2 \frac{\pi}{2(d+1)} } \asymp d^2. 
    \end{equation}
    \item Extended-domain (locality) type. Fix the mesh size $h=h_0$, and consider an extended domain $[0,(d+1)/h_0]$. Take $x_k = k h_0$, then $-\nabla^2\log p(x)$ has the same form as above with $h = h_0$. Therefore, 
    \begin{equation}
        \kappa = \frac{\gamma + 4 h_0^{-2} \sin^2 \frac{d\pi}{2(d+1)} }{\gamma + 4 h_0^{-2} \sin^2 \frac{\pi}{2(d+1)} } \approx \frac{\gamma+4 h_0^{-2}}{\gamma} \asymp 1. 
    \end{equation}
    In summary, the high-dimensionality in distributions of fixed-domain type comes from refined discretization; while for locality structure, it comes from an extended domain. Since interaction is still local in the extended system, the condition number should be dimension independent.
\end{itemize}
\end{rem}

\section{Localized Diffusion Models}    \label{Sec:LocDM}
\subsection{Localized Denoising Score Matching} 
\subsubsection{Localized Hypothesis Space}
To exploit the locality structure in diffusion models, we introduce the localized hypothesis space for the score function, 
\begin{equation}    \label{eqn:LocHypSp}
    \msH_r = \left\{ s_\theta : \mR^{d+1} \to \mR^d \mid s_{\theta,j}(x,t) = u_{\theta,j} (x_{\mcN_j^r},t), u_{\theta,j} \in \msU_j, j \in [b] \right\}, 
\end{equation}
where $r$ denotes the localization radius, $\mcN_j^r$ is the extended neighborhood \eqref{eqn:Nr}, and $\msU_j$ is certain hypothesis space for the $j$-th component of the score function to be specified later. 
Note here we use $s_{\theta,j}(\cdot,t)$ to approximate the score function of $p_t$ in light of \Cref{thm:wLocDist}. 

Define the \emph{effective dimension} of $s_\theta$ as 
\begin{equation}    \label{eqn:deff}
    \deff := \max_j d_{j,r}, \quad d_{j,r} := \sum_{i\in \mcN_j^r} d_i.
\end{equation}
Since $s_\theta(\cdot,t)$ can be viewed as a collection of functions $ \{ u_{\theta,j}(\cdot,t): \mR^{d_{j,r}} \to \mR^{d_j} \}_{j\in[b]}$, it is essentially a function of $\deff$ variables. For sparse graph, $\deff \ll d$, so that intuitively estimating $s_\theta$ in $\msH_r$ does not suffer from the CoD.  

\subsubsection{ReLU Neural Network}
In practice, $\msH_r$ can be realized by a neural network (NN) with locality constraints.  
Following \cite{pmlr-v202-oko23a}, we introduce the hyperparameters of a sparse NN as follows: 
\begin{itemize}
    \item $\sfL \in \mZ_+$ denotes the depth of the NN.
    \item $\sfW = (\sfw_0,\dots,\sfw_\sfL) \in \mR^{\sfL+1}$ denotes the width vector of the NN. 
    \item $\sfS,\sfB$ denote the sparsity and boundedness of the parameters. 
\end{itemize}
Consider the ReLU NN class with hyperparameters $(\sfL,\sfW,\sfS,\sfB)$: 
\begin{equation}
\begin{split}
    &\NN(\sfL,\sfW,\sfS,\sfB) = \{ u_\theta : \mR^{\sfw_0} \to \mR^{\sfw_\sfL} \mid \theta \in \Pa(\sfL,\sfW,\sfS,\sfB) \}, \\
    \Pa(\sfL,\sfW,\sfS,\sfB) =~& \left\{ \theta = \{W_l,b_l\}_{l=1}^\sfL \mid W_l \in \mR^{\sfw_l\times \sfw_{l-1}}, b_l \in \mR^{\sfw_l}, \norm{\theta}_0 \leq \sfS, \norm{\theta}_\infty \leq \sfB \right\}, \\
    u_\theta&(x) = W_\sfL \sigma(W_{\sfL-1} \sigma(\cdots \sigma(W_1 x + b_1) \cdots) + b_{\sfL-1}) + b_\sfL,
\end{split}
\end{equation}
where $\sigma(x) = \max\{0,x\}$ is the ReLU activation function (operated element-wise for a vector) and $\norm{\theta}_0, \norm{\theta}_\infty$ are the vector $\ell_0$ and $\ell_\infty$ norms of the parameter $\theta$. 

One can choose the hypothesis space $\msU_j$ as consisting of such ReLU NNs: 
\begin{equation}    \label{eqn:Uj_NN}
    \msU_j = \NN(\sfL^j,\sfW^j,\sfS^j,\sfB^j), \quad \text{where } ~\sfw^j_0 = d_{j,r} + 1, ~\sfw^j_\sfL = d_j.
\end{equation}
Here the hyperparameters $\sfL^j,\sfW^j,\sfS^j,\sfB^j$ are to be determined later. 

\subsubsection{Localized Score Matching}
Given the hypothesis space $\msH_r$ \eqref{eqn:LocHypSp} with localized NN score $\msU_j$ \eqref{eqn:Uj_NN}, we can learn the localized score function by minimizing the denoising score matching loss \eqref{eqn:DSM2}. 
Given i.i.d. sample $\{X^{(i)}\}_{i=1}^N$ from $p_0$, the population loss \eqref{eqn:DSM2} is approximated by the empirical loss, i.e.,
\begin{equation}    \label{eqn:EmpScore}
    \widehat{s} = \argmin_{s_\theta \in \msH_r} \widehat{\mcL}_N (s_\theta), 
\end{equation}
with
\begin{equation}    \label{eqn:EmpDSM}
    \widehat{\mcL}_N (s_\theta) = \frac{1}{N} \sum_{i=1}^N \int_\tb^T \mE_{\epsilon_t \sim \GN(0,I)} \Rectbrac{ \norm{ s_\theta(\alpha_t X^{(i)} + \sigma_t \epsilon_t, t) + \sigma_t^{-1} \epsilon_t }^2 } \mdd t. 
\end{equation}
Notice $\widehat{\mcL}_N$ is decomposable: $\widehat{\mcL}_N (s_\theta) = \sum_{j=1}^b \widehat{\mcL}_{j,N} (u_{\theta,j})$, where 
\begin{equation}    \label{eqn:EmpDSMj}
    \widehat{\mcL}_{j,N} (u_{\theta,j}) = \frac{1}{N} \sum_{i=1}^N \int_\tb^T \mE_{\epsilon_t \sim \GN(0,I)} \Rectbrac{ \norm{ u_{\theta,j}(\alpha_t X_{\mcN_j^r}^{(i)} + \sigma_t \epsilon_{t,\mcN_j^r}, t) + \sigma_t^{-1} \epsilon_{t,j} }^2 } \mdd t. 
\end{equation}
The optimal $\widehat{u}_j$ then solves 
\begin{equation}    \label{eqn:EmpScorej}
    \widehat{u}_j = \argmin_{u_{\theta,j} \in \msU_j } \widehat{\mcL}_{j,N} (u_{\theta,j}).
\end{equation}
This allows for \emph{parallel training} of the localized NNs, i.e., the components of the score function can be trained independently. Note the score function need not be a gradient field, which introduces great flexibility in designing hypothesis space.

\begin{rem}
For general distributions, the components of the score function are correlated, so that $\{s_{\theta,j}(x)\}_{j=1}^b$ should be trained simultaneously. However, for approximately localized distributions, most components of $s_\theta$ are almost uncorrelated, which facilitates parallel training. 
\end{rem}

\subsection{Error Analysis}     \label{Sec:AppErr}
\subsubsection{Error Decomposition}
We do not consider time discretization here for simplicity. The sampling process is 
\begin{equation}    \label{eqn:SamplSDE}
    \mdd \widehat{Y}_t = \Brac{ \widehat{Y}_t + 2 \widehat{s}(\widehat{Y}_t,T-t) } \mdd t + \sqrt{2} \mdd W_t, \quad \widehat{Y}_0 \sim \GN(0,I).
\end{equation}
And we take the early stopped distribution $\widehat{q}_{T-\tb} = \Law(\widehat{Y}_{T-\tb})$ as the approximation of $p_0$. It suffices to consider the error between $\widehat{q}_{T-\tb}$ and $p_\tb$, as it is easier to control the early stopping error, i.e., the distance between $p_\tb$ and $p_0$. The following error decomposition is standard \cite{chen2023sampling}. 

\begin{pro}    \label{prop:ErrDecomp}
Under Novikov's condition \cite{chen2023sampling}:
\begin{equation}
    \mE_\sfQ \Rectbrac{ \exp \Brac{ \frac{1}{2} \int_0^{T-\tb} \norm{ \widehat{s}(Y_t,T-t) - s(Y_t,T-t) }^2 \mdd t } } < \infty, 
\end{equation}
where $\sfQ = \Law(Y_{[0,T-\tb]})$ denotes the path measure of the reverse process \eqref{eqn:OUrev}. It holds that 
\begin{equation}    \label{eqn:ErrDecomp}
    \KL ( p_\tb \| \widehat{q}_{T-\tb} ) \leq \mee^{-2T} \KL ( p_0 \| \GN(0,I) ) + \int_\tb^T \mE_{x_t\sim p_t} \Rectbrac{ \norm{ \widehat{s}(x_t,t) - s(x_t,t) }^2 } \mdd t. 
\end{equation}
\end{pro}

The proof can be found in \Cref{App:ErrDecomp}. We note that the first term on the right hand side can be replaced by $\mee^{-2(T-\tb)} \KL ( p_\tb \| \GN(0,I) )$ when $p_0$ is singular w.r.t.~$\GN(0,I)$, so that it always decays exponentially in $T$ regardless of $p_0$. Thus it suffices to control the second term; i.e.~the score approximation error. 

\subsubsection{Localized Score Function}
As discussed in \Cref{Sec:LocInDM}, strict locality is not preserved in the forward OU process, so that the true score $s\notin \msH_r$ in general. 
It is therefore crucial to control the approximation error of the best possible approximation $s^* \in \msH_r$. 

Consider taking $\msU_j = C^2(\mR^{d_{j,r}+1})$ in the localized hypothesis space $\msH_r$ \eqref{eqn:LocHypSp}, so that the only constraint in $\msH_r$ is the locality structural constraint (note we always consider at least twice differentiable functions). Then the best possible approximation error can be identified as the \emph{localization error} of the score function. To avoid confusion, we denote $\msH_r^*$ as the hypothesis space when we take $\msU_j = C^2(\mR^{d_{j,r}+1})$. 

Motivated by \eqref{eqn:ErrDecomp}, we consider the optimal approximation in the $L^2(p_t)$ sense, i.e., 
\begin{equation}
    s^* = \argmin_{s_\theta \in \msH_r^*} \int_\tb^T \int \norm{ s_\theta(x,t) - s(x,t) }^2 p_t(x) \mdd x \mdd t
\end{equation}
\begin{equation}
    \ioi~ \forall j, ~s^*_j(x,t) = u_j^*(x_{\mcN_j^r},t), \quad u_j^* = \argmin_{u_{\theta,j} \in \msU_j} \int_\tb^T \int \normo{ u_{\theta,j}(x_{\mcN_j^r},t) - s_j(x,t) }^2 p_t(x) \mdd x \mdd t.
\end{equation}
Using the property of conditional expectation, it is straightforward to show that the optimizer is 
\begin{equation}    \label{eqn:OptLocScore}
\begin{split}
    u_j^*(x_{\mcN_j^r},t) =~& \mE_{x'\sim p_t} \Rectbrac{ s_j(x',t) \Big| x_{\mcN_j^r}' = x_{\mcN_j^r} } \\
    =~& \frac{1}{p_t(x_{\mcN_j^r})} \int \nabla_j \log p_t(x_{\mcN_j^r},x_{\mcN_j^{r_\bot}}) p_t(x_{\mcN_j^r},x_{\mcN_j^{r_\bot}}) \mdd x_{\mcN_j^{r_\bot}}. 
\end{split}
\end{equation}
Here we denote $\mcN_j^{r_\bot} := [b] \setminus \mcN_j^r$. 

Due to the approximate locality (\Cref{thm:wLocDist}), one can expect that the approximation error decays exponentially with the radius $r$. 
Before presenting the approximation result, we introduce a quantitative condition \cite{cui2025stein} characterizing the sparsity of the graph $G$. 

\begin{defn}    \label{def:loc_graph}
An undirected graph $G$ is called $(S,\nu)$-local if 
\begin{equation}    \label{eqn:loc_graph}
    \forall j \in V, ~ r \in \mN, \quad |\mcN_j^r| \leq 1 + S r^{\nu} .
\end{equation}
\end{defn}

In the above definition, $S$ denotes the maximal size of the immediate neighbor, and $\nu$ denotes the \emph{ambient dimension} of the graph, which controls the growth rate of the neighborhood volume with the radius. Here we require it growing at most polynomially to ensure effective locality. Note the ambient dimension $\nu$ is typically a small number. 
A motivating example for \Cref{def:loc_graph} is the lattice model $\mZ^{\nu}$, where a naive bound of the neighborhood volume is
\begin{equation}
    |\mcN_j^r| = | \{i\in \mZ^{\nu}: \norm{i}_1 \leq r\} | \leq (2r+1)^{\nu} < 1 + (3r)^{\nu}.
\end{equation}
So that $\mZ^{\nu}$ is $(3^{\nu},\nu)$-local.  

Now we state the approximation result. 
\begin{thm}   \label{thm:LocErr}
Let $p_0$ satisfy the conditions in \Cref{thm:wLocDist}, and its dependency graph is $(S,\nu)$-local. Consider the hypothesis space $\msH_r^*$ \eqref{eqn:LocHypSp} with $\msU_j = C^2(\mR^{d_{j,r}+1})$. Then there exists an optimal approximation $s^* \in \msH_r^*$ such that 
\begin{equation}    \label{eqn:LocErr}
    \int_\tb^T \norm{ s_j^*(x,t) - s_j (x,t) }_{L^2(p_t)}^2 \mdd t \leq C d_j (r+1)^{\nu} \mee^{-c(r+1)},  
\end{equation}
where $C$ and $c$ are some dimensional independent constants depending on $m,M,S,\nu$, i.e.,  
\begin{equation}
    C = 2 S \max\{1,m^{-1}\} \nu!  \kappa^{2\nu+1} \log \kappa, \quad c = -2\log (1-\kappa^{-1}). 
\end{equation} 
Note \eqref{eqn:LocErr} is independent of $\tb,T$. Moreover, for any $s_\theta \in \msH_r^*$, the Pythagorean equality holds 
\begin{equation}    \label{eqn:PythDecomp}
    \norm{ s_{\theta,j} (x,t) - s_j (x,t) }_{L^2(p_t)}^2 = \norm{ s_{\theta,j} (x,t) - s_j^*(x,t) }_{L^2(p_t)}^2 + \norm{ s_j^*(x,t) - s_j (x,t) }_{L^2(p_t)}^2. 
\end{equation}
\end{thm}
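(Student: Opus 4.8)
The plan is to exploit the fact that the optimal localized approximation $s_j^*$ from \eqref{eqn:OptLocScore} is precisely the conditional expectation $s_j^*(x_{\mcN_j^r},t) = \mE_{p_t}[s_j(x',t) \mid x'_{\mcN_j^r} = x_{\mcN_j^r}]$, i.e.\ the $L^2(p_t)$-orthogonal projection of $s_j$ onto the subspace of functions depending only on $x_{\mcN_j^r}$ (and $t$). This immediately gives the Pythagorean identity \eqref{eqn:PythDecomp}, since $s_{\theta,j} - s_j^*$ lies in that subspace and $s_j - s_j^*$ is orthogonal to it; so the last claim is essentially free. The substance is the bound \eqref{eqn:LocErr}. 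First I would rewrite the projection error as a variance: for fixed $t$ and fixed values of $x_{\mcN_j^r}$, $\|s_j^*(\cdot,t)-s_j(\cdot,t)\|_{L^2(p_t)}^2$ equals $\mE_{x_{\mcN_j^r}}\big[\operatorname{Var}_{p_t(x_{\mcN_j^{r_\bot}}\mid x_{\mcN_j^r})}(s_j(x,t))\big]$, i.e.\ the expected conditional variance of the score component given the $r$-neighborhood.

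Next I would control this conditional variance coordinate-by-coordinate. Writing $s_j$ as a $d_j$-dimensional vector, it suffices to bound, for each scalar output coordinate, the conditional variance, and then sum over the $d_j$ coordinates (this is where the linear factor $d_j$ in \eqref{eqn:LocErr} enters). The key analytic input is a Poincaré-type argument: the conditional distribution $p_t(x_{\mcN_j^{r_\bot}}\mid x_{\mcN_j^r})$ inherits log-concavity from $p_t$ — indeed $-\nabla^2\log p_t \succeq \tilde m_t I$ for an explicit $\tilde m_t$ coming from the Brascamp–Lieb / Cramér–Rao bound applied to the OU smoothing, namely $\tilde m_t = \alpha_t^2/(M\sigma_t^2+\alpha_t^2) \cdot$(something) — so it satisfies a Poincaré inequality with a dimension-free constant, and hence $\operatorname{Var}(s_{j,\ell}) \lesssim \tilde m_t^{-1} \mE[\|\nabla_{x_{\mcN_j^{r_\bot}}} s_{j,\ell}\|^2]$. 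Here $\nabla_{x_i} s_{j,\ell} = \nabla_{ij}^2 \log p_t$ restricted appropriately, and for $i \in \mcN_j^{r_\bot}$ one has $\sfd_G(i,j) \geq r+1$, so Theorem \ref{thm:wLocDist} gives $\|\nabla_{ij}^2\log p_t\|_\infty \leq \frac{\alpha_t^2}{\sigma_t^2(m\sigma_t^2+\alpha_t^2)}(1-\kappa_t^{-1})^{\sfd_G(i,j)}$ with $\kappa_t = (M\sigma_t^2+\alpha_t^2)/(m\sigma_t^2+\alpha_t^2)$. Summing the squared Hessian blocks over $i\in\mcN_j^{r_\bot}$ and using the $(S,\nu)$-locality to count how many vertices sit at each graph distance $\rho \geq r+1$ — at most $S\rho^\nu$ of them — produces a geometric-type sum $\sum_{\rho\geq r+1} S\rho^\nu (1-\kappa_t^{-1})^{2\rho}$, which is $\lesssim S(r+1)^\nu (1-\kappa_t^{-1})^{2(r+1)}$ up to a $\nu!\,\kappa^{2\nu}$-type prefactor from the tail estimate $\sum_{k\geq 0}(k+r+1)^\nu z^k \leq \nu! (1-z)^{-\nu-1}(r+1)^\nu z^{-\nu}$ style bound.

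Finally I would assemble the pieces: multiply the per-coordinate bound $\tilde m_t^{-1} \cdot [\text{Hessian sum}]$ by $d_j$, and then integrate over $t\in[\tb,T]$. The crucial point — which makes the bound independent of $\tb$ and $T$ — is that the $t$-dependent prefactors telescope or integrate to a finite dimension-free constant: roughly, $\int_\tb^T \tilde m_t^{-1}\frac{\alpha_t^4}{\sigma_t^4(m\sigma_t^2+\alpha_t^2)^2}(1-\kappa_t^{-1})^{2(r+1)} dt$ must be bounded uniformly in $\tb,T$ by $C\mee^{-c(r+1)}$; one checks $(1-\kappa_t^{-1}) \leq (1-\kappa^{-1})$ for all $t$ (since $\kappa_t \leq \kappa$), which decouples the exponential decay from $t$ and gives the clean factor $\mee^{-c(r+1)}$ with $c = -2\log(1-\kappa^{-1})$, while the remaining $t$-integral of the rational OU factors is a convergent integral with a value absorbed into $C = 2S\max\{1,m^{-1}\}\nu!\,\kappa^{2\nu+1}\log\kappa$. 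I expect the main obstacle to be this last step — carefully tracking the $t$-dependent constants through the change of variables $\sigma_t^2 = 1-\alpha_t^2$ and verifying the $t$-integral converges near $t=0$ (where $\sigma_t\to 0$) without picking up a $\tb$-dependence; the near-$t=0$ behavior is exactly where $\kappa_t \to 1$ and the contraction factor degenerates, so one needs the polynomial $(r+1)^\nu$ factor and the $\log\kappa$ to soak up a logarithmic divergence, and getting the interplay right between the blowup of $\alpha_t^2/\sigma_t^2$ and the decay of $(1-\kappa_t^{-1})^{2(r+1)}$ is the delicate bookkeeping. A secondary subtlety is justifying the Poincaré inequality and the differentiation-under-the-integral for the conditional law, which requires the smoothness and the uniform log-concavity of $p_t$ — both guaranteed under the standing assumptions via the Brascamp–Lieb bound used already in Theorem \ref{thm:wLocDist}.
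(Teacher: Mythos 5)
Your proposal follows essentially the same route as the paper's proof: identify $s_j^*$ as the conditional expectation so that \eqref{eqn:PythDecomp} is the orthogonality of the $L^2(p_t)$-projection, rewrite the localization error as an expected conditional variance, apply a Poincar\'e inequality for the strongly log-concave conditional law, bound $\nabla_{ij}^2\log p_t$ via \Cref{thm:wLocDist}, count vertices at each graph distance using $(S,\nu)$-locality, and integrate in $t$ with a bound uniform in $\tb,T$ (the paper's \Cref{lem:IntBound}).

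One concrete caveat in your last step: as literally written, the plan ``pull out $(1-\kappa_t^{-1})^{2(r+1)}\le(1-\kappa^{-1})^{2(r+1)}$ for all $t$, then integrate the remaining rational OU factors'' fails, because the leftover integral $\int_0^\infty \alpha_t^4\,\sigma_t^{-2}(m\sigma_t^2+\alpha_t^2)^{-3}\,\mdd t$ diverges logarithmically at $t=0$ (the integrand behaves like $1/(2t)$). The convergence uniformly in $\tb$ relies precisely on keeping the $t$-dependent contraction inside the integral: as $t\to 0$ one has $\kappa_t\to 1$, so $(1-\kappa_t^{-1})^{2k}$ vanishes fast enough (already for $k\ge 1$) to kill the blow-up of $\alpha_t^2/\sigma_t^2$, and this joint treatment is exactly what \Cref{lem:IntBound} does via the change of variables $\lambda=\alpha_t^2/\sigma_t^2$, after which one divides by $(1-\kappa^{-1})^{2k}$ and bounds the resulting integral by $2\log\kappa$. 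You flag this interplay yourself as the delicate bookkeeping, so the gap is one of execution rather than of approach, but the decoupling must be done after the change of variables (or not at all), not before the $t$-integration.
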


The proof can be found in \Cref{App:LocErr}.
\eqref{eqn:LocErr} provides an upper bound for the hypothesis error of using a localized score function to approximate the true score function. 
Note the bound is \emph{independent} of the ambient dimension $d$, although the true score $s_j(x,t)$ is a $d$-dimensional function. 
Secondly, the bound decays exponentially (up to a polynomial factor) w.r.t.~the radius $r$, so that a small $r$ is sufficient to achieve a good approximation. 
Finally, note taking summation over $j\in[b]$ in \eqref{eqn:LocErr} gives the total approximation error 
\begin{equation}
    \int_0^T \norm{ s_\theta (x,t) - s(x,t) }_{L^2(p_t)}^2 \mdd t \leq C d (r+1)^{\nu} \mee^{-c(r+1)}, 
\end{equation}
which scales linearly with the dimension $d$.

\subsection{Sample Complexity}
In this section, we demonstrate the key advantage of the localized diffusion models, i.e., that the sample complexity is independent of the ambient dimension $d$. We will show that the denoising score matching with the localized hypothesis space $\msH_r$ is equivalent to fitting the $L^2$-optimal localized score in \eqref{eqn:OptLocScore}. Since the localized scores are low-dimensional functions, the sample complexity should be independent of $d$. 

\subsubsection{Equivalent to Diffusion Models for Marginals}
A key observation is that the localized denoising score matching loss \eqref{eqn:EmpDSMj} is \emph{equivalent} to the $j$-th component loss of the score function when we use standard diffusion model to approximate the \emph{marginal distribution} $p_0(x_{\mcN_j^r})$. To be precise, denote its population version as  
\begin{equation}    \label{eqn:DSMj}
    \mcL_j (u_{\theta,j}) = \mE_{x_0\sim p_0} \int_\tb^T \mE_{\epsilon_t \sim \GN(0,I)} \Rectbrac{ \norm{ u_{\theta,j}(\alpha_t x_{0,\mcN_j^r} + \sigma_t \epsilon_{t,\mcN_j^r}, t) + \sigma_t^{-1} \epsilon_{t,j} }^2 } \mdd t.
\end{equation}
The following proposition shows the equivalence. 

\begin{pro}   \label{prop:DSMj}
The following equalities hold: 
\begin{equation}
\begin{split}
    \mcL_j (u_{\theta,j}) =~& \mE_{x_{0,\mcN_j^r} \sim p_0} \int_\tb^T \mE_{\epsilon_t \sim \GN(0,I)} \Rectbrac{ \norm{ u_{\theta,j}(\alpha_t x_{0,\mcN_j^r} + \sigma_t \epsilon_{t,\mcN_j^r}, t) + \sigma_t^{-1} \epsilon_{t,j} }^2 } \mdd t \\
    =~& \mE_{x_{0,\mcN_j^r} \sim p_0} \int_\tb^T \mE_{x_{t,\mcN_j^r} \sim p_{t|0}(x_{t,\mcN_j^r} |x_{0,\mcN_j^r})} \Rectbrac{ \norm{ u_{\theta,j}(x_{t,\mcN_j^r}, t) - \nabla_j \log p_{t|0} (x_{t,\mcN_j^r} |x_{0,\mcN_j^r}) }^2 } \mdd t \\
    =~& \int_\tb^T \mE_{x_{t,\mcN_j^r} \sim p_t} \Rectbrac{ \norm{ u_{\theta,j}(x_{t,\mcN_j^r}, t) - u_j^* (x_{t,\mcN_j^r}, t)  }^2 } \mdd t + \const. 
\end{split}
\end{equation}
Here $u_j^*$ is the optimal localized approximation of the score function \eqref{eqn:OptLocScore}, and the constant depends only on $p_0$. 
\end{pro}

The proof can be found in \Cref{App:DSMj}. 
\Cref{prop:DSMj} implies that the localized score matching can be regarded as $b$ diffusion models, each of which aims to fit (one component of) the score function of a low-dimensional marginal distribution. Using the minimax results of diffusion models, e.g.~\cite{pmlr-v202-oko23a}, one immediately obtains that the sample complexity of the localized score matching is essentially independent of the ambient dimension $d$. 

\subsubsection{A Complete Error Analysis}
We provide a concrete result below. Following \cite{pmlr-v202-oko23a}, we assume a further boundedness constraint on the hypothesis space $\msH_r$ \eqref{eqn:LocHypSp}: 
\begin{equation}   \label{eqn:BoundHyp}
    \msH_r^N = \left\{ s\in \msH_r ~\Big|~ \forall j,~  \norm{s_j(\cdot,t)}_\infty \lesssim \frac{\log^2 N}{\sigma_t} \right\}. 
\end{equation}
The constraint is natural as the score function scales with $\sigma_t^{-1}$; see \cite{pmlr-v202-oko23a} for more discussions. We also assume the following technical regularity conditions on the target distribution.

\begin{asm} \label{asm:regularity}
The target distribution $p_0$ satisfies the following conditions:
\begin{itemize}
    \item (Boundedness) $p_0$ is supported on $[-M,M]^d$, and its density is upper and lower bounded by some constants $C_p,C_p^{-1}$ respectively. 
    \item ($\gamma$-smoothness) For any $j\in [b]$, its marginal density $p_0(x_{\mcN_j^r}) \in \mcB_R(B_{a,b}^\gamma([-M,M]^{d_{j,r}}))$. Here $B_{a,b}^\gamma$ denotes the Besov space with $0<a,b\leq \infty$ and $\gamma>(1/a-1/2)_+$, and $\mcB_R$ denotes the ball of radius $R$ in the Besov space.
    \item (Boundary smoothness) $p_0(x_{\mcN_j^r})|_\Omega \in \mcB_1(C^\infty(\Omega))$, where $\Omega = [-M,M]^{d_{j,r}} \setminus [-M+a_0,M-a_0]^{d_{j,r}}$ is the boundary region for some sufficiently small width $a_0>0$. Given sample size $N$, one can take $a_0 \approx N^{-\frac{1}{\deff}}$, where $\deff$ is the effective dimension \eqref{eqn:deff}. 
\end{itemize}
\end{asm}

\begin{rem}
\cite{pmlr-v202-oko23a} only considers the standard domain $[-1,1]^d$. It can be simply extended to $[-M,M]^d$ by scaling argument. Denote $p^M := M^{d}p_0(M \cdot)$, then $p^M$ is supported on $[-1,1]^d$ and satisfies the same regularity conditions. Note the scaling only affects the radius $R$ of the Besov space, and does not change the scaling of the sample complexity. 
\end{rem}

See \cite{pmlr-v202-oko23a} for more discussions on the regularity conditions. The following theorem provides an overall error analysis by combining \Cref{prop:ErrDecomp}, \Cref{thm:LocErr} and Theorem 4.3 in \cite{pmlr-v202-oko23a}. 
We comment that \cite{yakovlev2025generalization} points out a flaw in the proof in \cite{pmlr-v202-oko23a}, but the issue is fixed in \cite{yakovlev2025generalization}. 

\begin{thm} \label{thm:SampComp}
Let $p_0$ satisfy \Cref{asm:regularity} and the conditions in \Cref{thm:LocErr}. Given sample size $N$, let $\msH_r^N$ be the bounded hypothesis space \eqref{eqn:BoundHyp} with $\msU_j = \NN(\sfL^j,\sfW^j,\sfS^j,\sfB^j)$ \eqref{eqn:Uj_NN}. Denote $n_j = N^{-d_j/(2\gamma+d_j)}$, and choose the hyperparameters 
\begin{equation}
    \sfL^j = \mcO(\log^4 n_j), \quad \norm{\sfW^j}_\infty = \mcO(n_j\log^6 n_j), \quad \sfS^j = \mcO(n_j\log^8 n_j), \quad \sfB^j = n_j^{\mcO(\log \log n_j)}.
\end{equation}
choose $\tb = \mcO(N^{-k})$ for some $k>0$ and $T \asymp \log N$. Let $\widehat{s}$ be the minimizer of the empirical loss \eqref{eqn:EmpDSM} in $\msH_r^N$. Denote $\widehat{q}_{T-t}$ as the sampled distribution using learned score $\widehat{s}$. Then it holds that 
\begin{equation}    \label{eqn:Err}
    \mE_{\{X^{(i)}\}_{i=1}^N} [ \KL ( p_\tb \| \widehat{q}_{T-\tb} ) ] \leq \mee^{-2T} \KL ( p_0 \| \GN(0,I) ) + C d (r+1)^{\nu} \mee^{-c(r+1)} + C' b N^{-\frac{2\gamma}{\deff+2\gamma}} \log^{16} N.
\end{equation}
Here $\deff$ is the effective dimension \eqref{eqn:deff}, $C,c$ are dimensional independent constants in \Cref{thm:LocErr}, and $C'$ is a dimensional independent constant. 
\end{thm}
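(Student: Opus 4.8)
The plan is to stitch together the three results already in hand: the error decomposition \Cref{prop:ErrDecomp}, the localization error bound \Cref{thm:LocErr}, and the sample-complexity bound for diffusion models on low-dimensional targets obtained by feeding \Cref{prop:DSMj} into Theorem~4.3 of \cite{pmlr-v202-oko23a} (with the fix of \cite{yakovlev2025generalization}).

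First I would take the expectation over the training sample $\{X^{(i)}\}_{i=1}^N$ on both sides of \eqref{eqn:ErrDecomp}. The term $\mee^{-2T}\KL(p_0\|\GN(0,I))$ is deterministic, so it remains to control $\mE_{\{X^{(i)}\}}[\int_\tb^T \mE_{x_t\sim p_t}\norm{\widehat s(x_t,t)-s(x_t,t)}^2\mdd t]$. (Beforehand one checks Novikov's condition: $\widehat s\in\msH_r^N$ is bounded by $\mcO(\sigma_t^{-1}\log^2 N)$ and, by \Cref{thm:LocErr} together with \Cref{asm:regularity}, $s$ has the integrability needed for the exponential moment to be finite; this is routine.) Since $\widehat s\in\msH_r^N\subset\msH_r$, each component $\widehat s_j(x,t)=\widehat u_j(x_{\mcN_j^r},t)$ depends only on $x_{\mcN_j^r}$, so the Pythagorean identity \eqref{eqn:PythDecomp} applies componentwise --- it needs only that $s_j^*$ is the $L^2(p_t)$-projection of $s_j$ onto functions of $x_{\mcN_j^r}$, not the $C^2$ structure of $\msH_r^*$. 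Summing over $j\in[b]$ gives
\[
    \int_\tb^T \mE_{x_t\sim p_t}\norm{\widehat s(x_t,t)-s(x_t,t)}^2\mdd t = \sum_{j=1}^b\int_\tb^T\norm{\widehat u_j(\cdot,t)-u_j^*(\cdot,t)}_{L^2(p_t)}^2\mdd t + \sum_{j=1}^b\int_\tb^T\norm{u_j^*(\cdot,t)-s_j(\cdot,t)}_{L^2(p_t)}^2\mdd t,
\]
i.e.\ a statistical error plus a localization error. For the localization error, \Cref{thm:LocErr} bounds the $j$-th summand by $Cd_j(r+1)^\nu\mee^{-c(r+1)}$, and $\sum_j d_j=d$ gives the middle term $Cd(r+1)^\nu\mee^{-c(r+1)}$ of \eqref{eqn:Err}, uniformly in $\tb,T$.

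For the statistical error I would use \Cref{prop:DSMj}: for each $j$, $\mcL_j(u_{\theta,j})=\int_\tb^T\norm{u_{\theta,j}(\cdot,t)-u_j^*(\cdot,t)}_{L^2(p_t)}^2\mdd t+\const_j$, so the $j$-th summand of the statistical error equals $\mE_{\{X^{(i)}\}}[\mcL_j(\widehat u_j)]-\const_j$. Because $\widehat u_j=\argmin_{u_{\theta,j}\in\msU_j}\widehat{\mcL}_{j,N}(u_{\theta,j})$ and $\widehat{\mcL}_{j,N}$ is precisely the empirical denoising-score-matching loss for the marginal $p_0(x_{\mcN_j^r})$ (the marginal of an empirical sample is the empirical marginal), this is exactly the expected excess risk of denoising score matching for a $d_{j,r}$-dimensional target over the ReLU-NN class $\msU_j=\NN(\sfL^j,\sfW^j,\sfS^j,\sfB^j)$. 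Under \Cref{asm:regularity} the marginal $p_0(x_{\mcN_j^r})$ is supported on $[-M,M]^{d_{j,r}}$, has density bounded above and below, and is $\gamma$-Besov smooth with the stated boundary regularity, so Theorem~4.3 of \cite{pmlr-v202-oko23a} (as corrected in \cite{yakovlev2025generalization}) applies with these hyperparameters, $\tb=\mcO(N^{-k})$ and $T\asymp\log N$, and yields $\mE_{\{X^{(i)}\}}[\mcL_j(\widehat u_j)]-\const_j\le C'N^{-2\gamma/(d_{j,r}+2\gamma)}\log^{16}N\le C'N^{-2\gamma/(\deff+2\gamma)}\log^{16}N$, using $d_{j,r}\le\deff$. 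Summing the $b$ components gives the last term $C'bN^{-2\gamma/(\deff+2\gamma)}\log^{16}N$ of \eqref{eqn:Err}; adding the three contributions proves the theorem.

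The hard part is the invocation of Theorem~4.3 of \cite{pmlr-v202-oko23a} in the last step: one must check that the conditions of \Cref{asm:regularity} are exactly those their theorem requires for the marginal $p_0(x_{\mcN_j^r})$ (Besov smoothness and index range, the $N^{-1/\deff}$-scale boundary decay, the two-sided density bounds), that the hyperparameter choices expressed through $n_j$ match their NN construction, and --- most delicately --- that the $\tb$- and $T$-dependent factors in their bound are absorbed into the polylogarithmic factor under $\tb=\mcO(N^{-k})$, $T\asymp\log N$. Everything else --- the Pythagorean split, the two summations over $j$, and the fact that the $b$ parallel estimators contribute only a factor $b$ rather than anything growing with the ambient dimension $d$ --- is bookkeeping built on results already established in the paper.
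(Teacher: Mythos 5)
Your proposal is correct and follows essentially the same route as the paper's proof: take expectations in the error decomposition of \Cref{prop:ErrDecomp}, split the score error via the Pythagorean identity \eqref{eqn:PythDecomp}, bound the localization term by summing \Cref{thm:LocErr} over $j$, and bound the statistical term by identifying each $\mcR_j$ through \Cref{prop:DSMj} with the component loss of a diffusion model for the marginal $p_0(x_{\mcN_j^r})$ and invoking Theorem~4.3 of \cite{pmlr-v202-oko23a}, then summing over the $b$ components. The points you flag as requiring verification (matching \Cref{asm:regularity} to the marginal, the $n_j$-parameterized network construction, absorbing the $\tb,T$ dependence, and bounding the component loss by the marginal's full score-matching loss) are exactly the ones the paper handles, at the same level of detail.
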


The proof can be found in \Cref{App:SampComp}. There are three sources of error in \eqref{eqn:Err}: 
\begin{itemize}
    \item Approximation error of $p_T$, which decays exponentially in terminal time $T$;
    \item Localization error of the score function, which decays exponentially in localization radius $r$;
    \item Statistical error, which decays polynomially in $N$, with statistical rate $\frac{2\gamma}{\deff+2\gamma}$.
\end{itemize}

\begin{rem}
(1) Compared to the vanilla method, the localized diffusion models achieve a much faster statistical rate $\frac{2\gamma}{\deff+2\gamma} \gg \frac{2\gamma}{d+2\gamma}$, and thus potentially mitigate the curse of dimensionality.  

(2) \eqref{eqn:Err} indicates a trade off in the choice of localization radius $r$. A smaller $r$ leads to smaller statistical error but induces larger localization error. Note $\deff \asymp r^{\nu}$ (see \Cref{def:loc_graph}), so that the optimal choice is $r^* = \mcO( (\log N)^{\frac{1}{\nu+1}} )$. When $\log N \ll d^{\frac{\nu+1}{\nu}}$, one can show that the overall error is greatly reduced compared to the usual statistical error: 
\begin{equation}
    \mee^{-cr^*} + N^{-\frac{2\gamma}{\deff^*+2\gamma}} \ll N^{-\frac{2\gamma}{d+2\gamma}}. 
\end{equation}
This is usually the case in high-dimensional problems, as one cannot obtain a large sample size $N$ exponentially in $d$. 


(3) We compare the sampled distribution to the early-stopped distribution $p_\tb$ by convention. In fact, the early-stopping error can be controlled straightforwardly in Wasserstein distance. For instance, by Lemma 3 in \cite{chen2023sampling}, it holds that $\sfW_2^2(p,p_\tb) \lesssim d\tb$. So that the overall error 
\begin{equation}
    \mE_{\{X^{(i)}\}_{i=1}^N} [\sfW_2^2(p,\widehat{q}_{T-\tb}) ] \lesssim \sfW_2^2(p,p_\tb) + \mE_{\{X^{(i)}\}_{i=1}^N} [\sfW_2^2(p_\tb,\widehat{q}_{T-\tb})] \lesssim d N^{-k} + \mE_{\{X^{(i)}\}_{i=1}^N} [ \KL ( p_\tb \| \widehat{q}_{T-\tb} ) ].
\end{equation}
Here the second inequality uses Talagand's inequality. The early-stopping error does not deteriorate the order of convergence if one take $k\geq\frac{1}{2}$.
\end{rem}

\section{Numerical Experiments}    \label{Sec:NumExp}
\subsection{Gaussian model}
In this section, we verify the quantitative results obtained before using Gaussian models. First, we use randomly generated Gaussian distributions to show that the locality is approximately preserved in OU process. Second, we consider sampling a discretized OU process, and show that a suitable localization radius is important to balance the localization and statistical error. 

\subsubsection{Approximate locality}    \label{sec:AppLoc}
Consider localized Gaussian distribution 
\begin{equation}
    p_0 = \GN(0,C_0),
\end{equation}
where the precision matrix $P_0 := C_0^{-1}$ is a banded matrix s.t.
\begin{equation}
    P_0(i,j) = 0, \quad \forall |i-j| > r_0. 
\end{equation}
We will generate random localized precision matrices $P_0$ with different dimensions and bandwidths, by taking $P_0 = LL\matT$, where $L$ is a randomly generated banded lower triangular matrix. As the condition number plays an important role in the locality, we will also record the condition number of the precision matrices. 

We consider diffusion models to sample the distribution. The score function admits an explicit form $s(x,t) = \nabla \log p_t(x) = - P_t x$, where 
\begin{equation}
    P_t := - \nabla^2 \log p_t = ( \alpha_t^2 C_0 + \sigma_t^2 I )^{-1}. 
\end{equation}
We will focus on $P_t$, as the locality of the score function $s(\cdot,t)$ is \emph{equivalent} to the locality of the precision matrix $P_t$ for Gaussians.

\begin{figure}[!b]
\centering
\includegraphics[trim=80 180 70 180, clip, width=6cm]{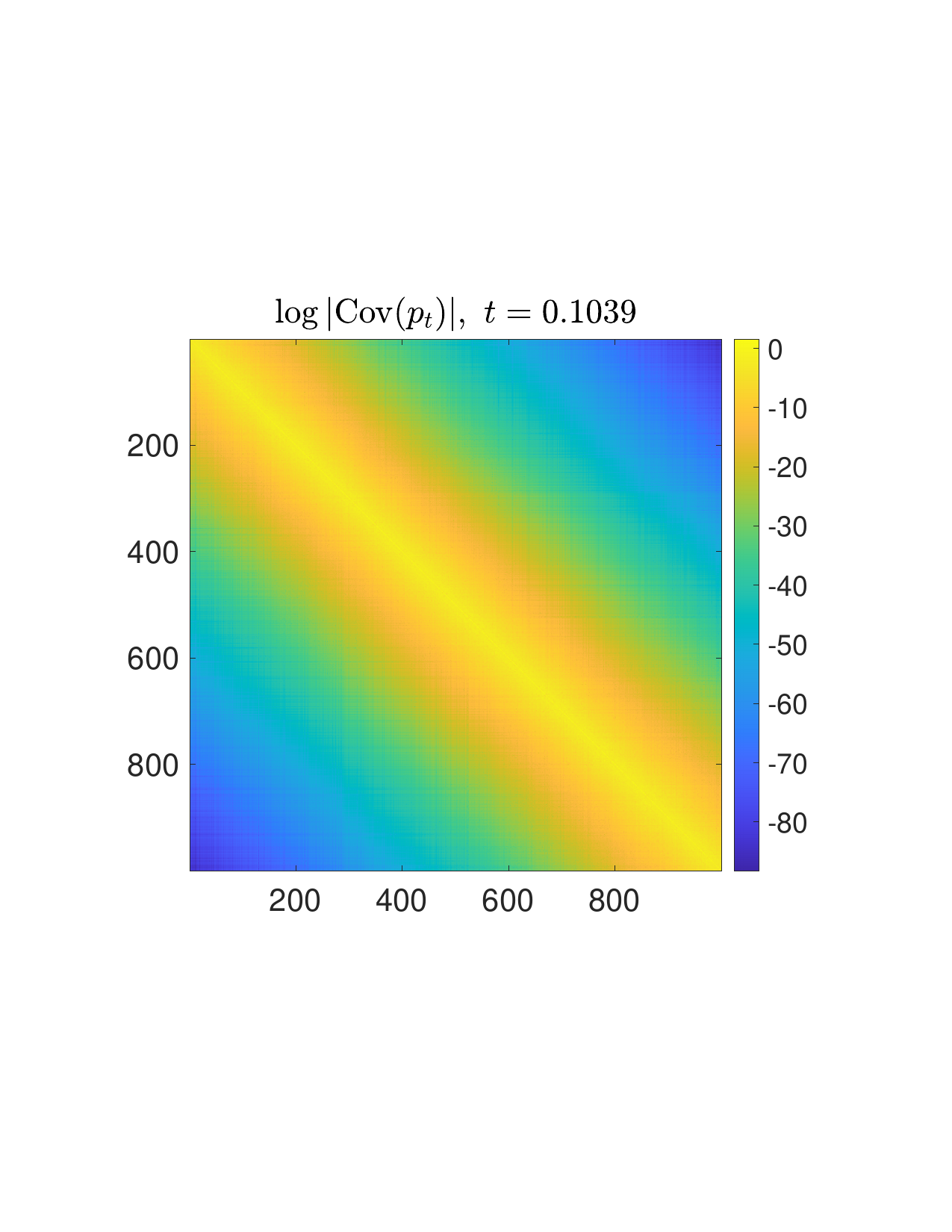} \quad
\includegraphics[trim=40 180 70 180, clip, width=6cm]{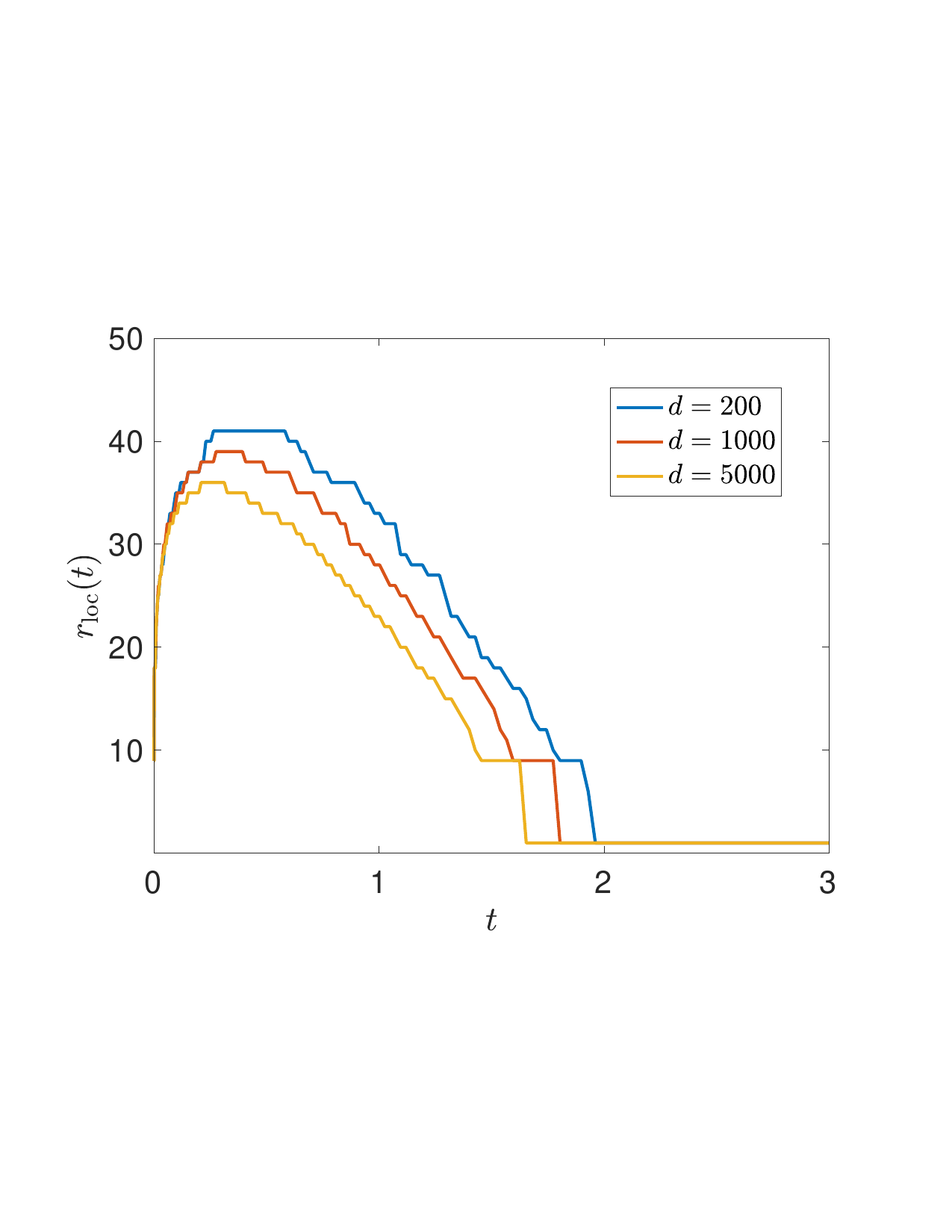} \\
\includegraphics[trim=40 180 70 180, clip, width=6cm]{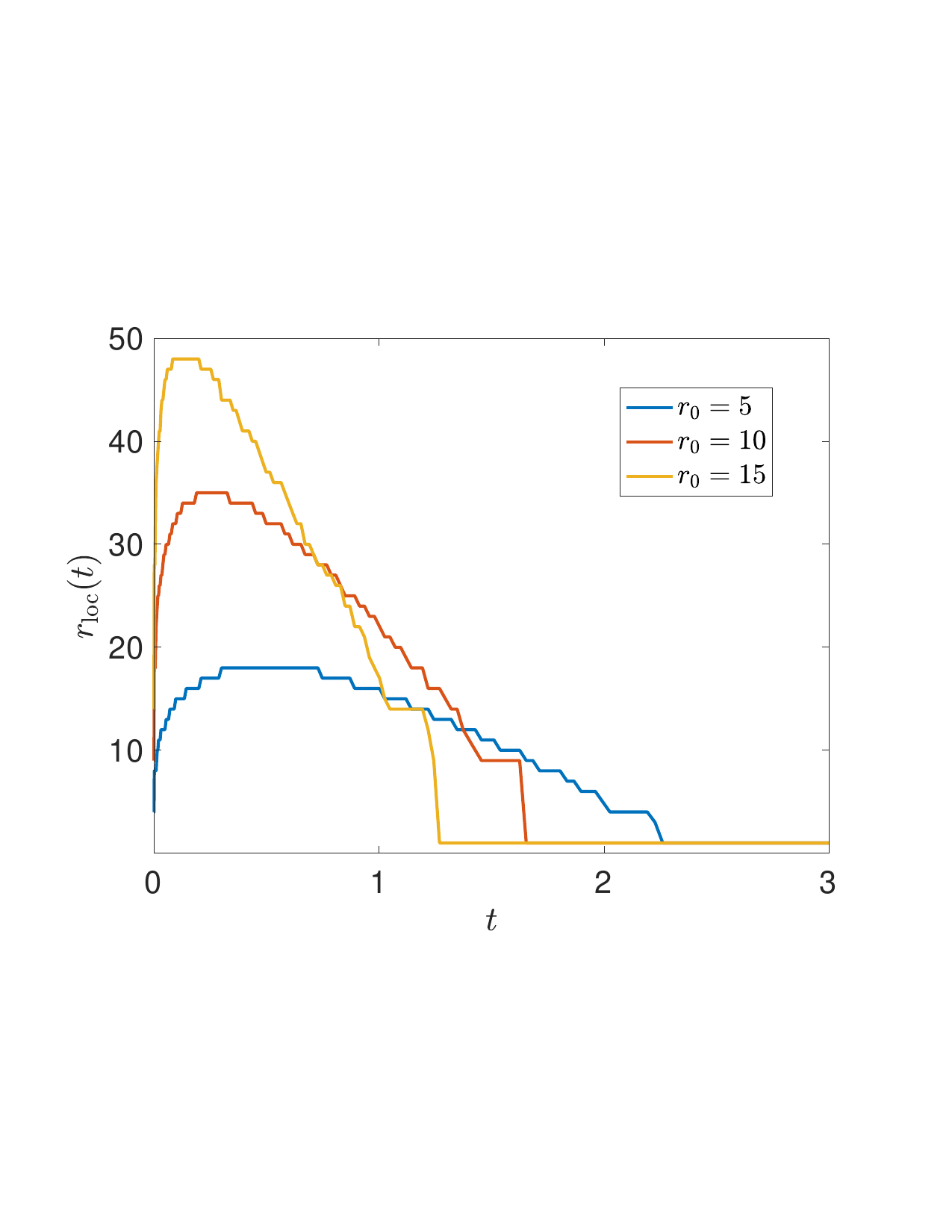} \quad
\includegraphics[trim=40 180 70 180, clip, width=6cm]{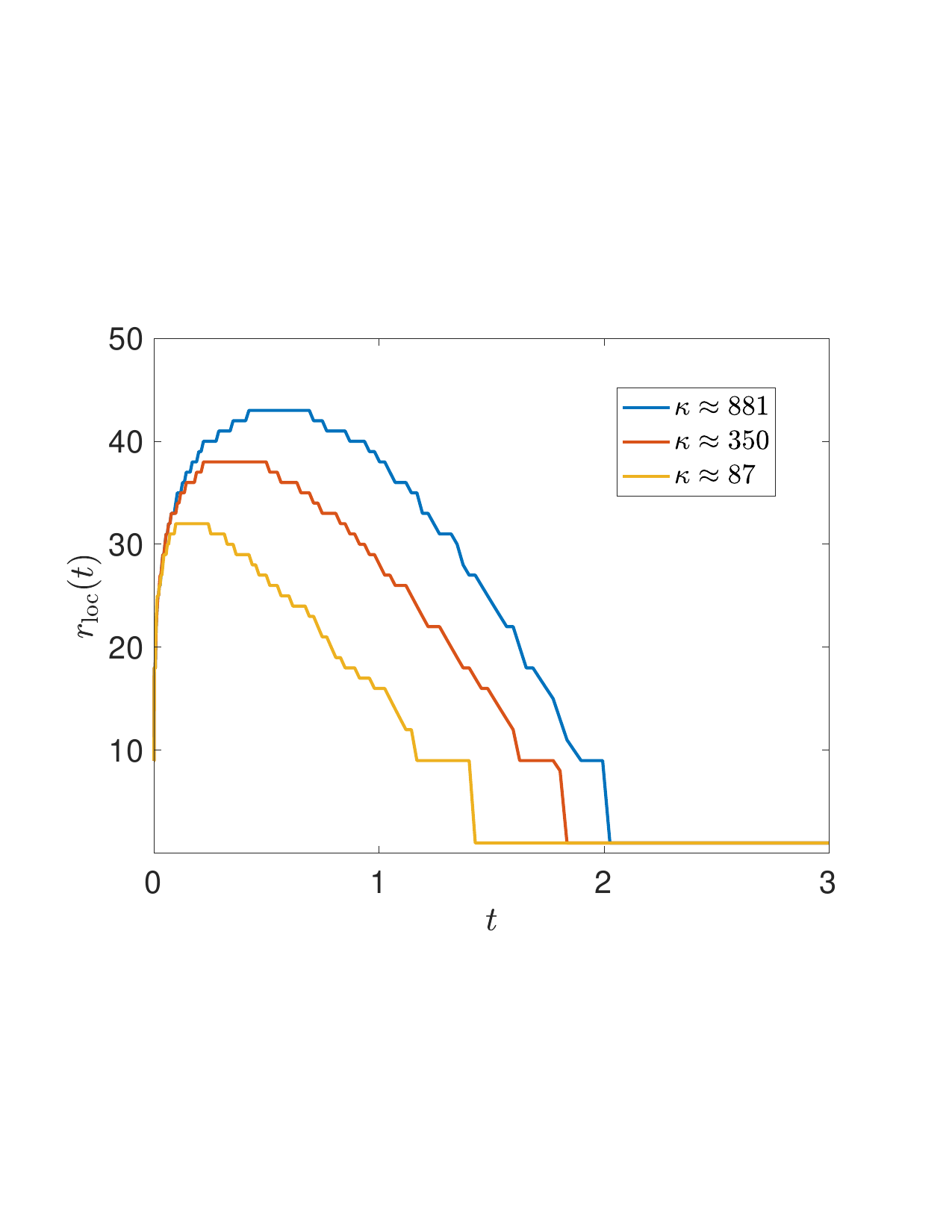}
\caption{Top-left: The precision matrix $P_t$ at $t=0.1039$, plotted in $\log|P_t|$ scale. We can see precise exponential decay of $P_t(i,j)$ in $|i-j|$. The rest plots are the localization radius $r_{\rm loc}(t)$ \eqref{eqn:EffLoc} under different problem dimension $d$, precision matrix bandwidth $r_0$ and condition number $\kappa$. Top-right: $r_{\rm loc}(t)$ with different dimensions. Here $r_0 = 10$ and the condition numbers are similar ($\kappa \approx 193, 191, 197$). Bottom-left: $r_{\rm loc}(t)$ with different bandwidths. Here $d=1,000$ and condition numbers $\kappa \approx 163,146,132$.  Bottom-right: $r_{\rm loc}(t)$ with different condition numbers. Here $d=1,000$ and $r_0=10$.}
\label{fig:Locality}
\end{figure}

First, we show in the top-left plot in \Cref{fig:Locality} that the $|P_t(i,j)|$ is indeed exponentially decaying with $|i-j|$. Here we take a snapshot of the precision matrix at $t=0.1039$, which is the time with maximal effective localization radius (see bottom-left plot in \Cref{fig:Locality}). We note that the precise exponential decay is not chosen artificially, and any snapshot will yield similar results. 

We then compute the effective localization radius of $P_t$, which is defined as the largest $r$ such that the average of the $r$-th off-diagonal elements is larger than a threshold. More precisely, 
\begin{equation}    \label{eqn:EffLoc}
    r_{\rm loc}(t) := \max \Big\{ 1\leq r < d ~:~  \frac{1}{d-r} \sum_{1\leq i\leq d-r } |P_t(i,i+r)| \geq \epsilon \cdot \frac{1}{d} {\rm tr} (P_t) \Big\}. 
\end{equation}
We take the threshold rate $\epsilon = 0.001$. We plot the function $r_{\rm loc}(t)$ for different dimensions $d$, bandwidths $r_0$ and condition numbers $\kappa$ in \Cref{fig:Locality}. 

From \Cref{fig:Locality}, we can see that the effective localization radius $r_{\rm loc}(t)$ first increases with $t$, and then decreases to $1$ when $t$ is large. This is due to the fact that $P_t$ can be regarded as an interpolation between $P_0$ and $P_\infty = I$. Note this is consistent with the theoretical prediction in \Cref{thm:wLocDist}, where the bound of $\normo{\nabla_{ij}^2 \log p_t}$ first increases with $t$ and then decreases to $0$. Next, we can see that the effective localization radius $r_{\rm loc}(t)$ is almost independent of the dimension $d$, consistent with our motivation that the locality structure is approximately preserved with dimension independent radius. We can also see that the effective localization radius $r_{\rm loc}(t)$ is almost linear in the bandwidth $r_0$, and increases with the condition number $\kappa$.

\subsubsection{Balance of localization error and statistical error}
Consider a discretized OU process $X \in \mR^d~(d=101)$, where $X_n$ follows the dynamics
\begin{equation}
    X_1 \sim \GN(0,1), \quad X_{n+1} = \alpha_h X_n + \sigma_h \xi_n, \quad \xi_n \sim \GN(0,1), 
\end{equation}
where $\alpha_h = \mee^{-h}, \sigma_h^2 = 1-\alpha_h^2 ~(h = 0.2)$, and $X_1, \xi_1,\dots,\xi_{100}$ are independent.
Notice $X$ follows a Gaussian distribution 
\begin{equation}
    p_0(x) = \GN(x_1;0,1) \prod_{n=1}^{d-1} \GN(x_{n+1}; \alpha_{h} x_n, \sigma_h^2). 
\end{equation}
Consider using diffusion model to sample the above distribution. Since the marginals of the forward process are all Gaussians, the score function is a linear function in $x$. Given data sample $\{X^{(i)}\}_{i=1}^N$, we estimate the score of the linear form $\widehat{s}(t,x) = - \widehat{P}_t x$ by the loss \eqref{eqn:DSM2}, which admits an explicit solution 
\begin{equation}    \label{eqn:EmpP}
    \widehat{P}_t = ( \alpha_t^2 \widehat{C}_0 + \sigma_t^2 I )^{-1}, 
\end{equation}
where $\widehat{C}_0$ is the empirical covariance of $\{X^{(i)}\}_{i=1}$. The non-localized backward process is 
\begin{equation}    \label{eqn:Samp_Ref}
    Y_{t_{n+1}} = Y_{t_n} + \Delta t_n \Brac{ I - 2 \widehat{P}_{T-t_n} } Y_{t_n} + \sqrt{2 \Delta t_n } \xi_n. 
\end{equation}
Here $\widehat{P}_t$ is the estimated optimal precision matrix \eqref{eqn:EmpP}, $\xi_n \sim \GN(0,I), Y_0 \sim \GN(0,I)$, and $\Delta t_n = t_{n+1} - t_n$ is the time step. We use the linear variance schedule $\beta_n = ( \beta_N -\beta_1) \frac{n-1}{N-1} + \beta_1 ~(1 \leq n \leq N)$ \cite{NEURIPS2020_4c5bcfec}, which corresponds to $\Delta t_n = - \frac{1}{2} \log (1-\beta_{N-n})~(0 \leq n \leq N-1)$. We take $N=1,000, \beta_1 = 10^{-4}$ and $\beta_N = 0.05$. 

A straightforward localization of \eqref{eqn:Samp_Ref} is 
\begin{equation}    \label{eqn:Samp_Loc}
\begin{split}
    Y_{t_{n+1}}^{{\rm loc},r} = Y_{t_n}^{{\rm loc},r} + \Delta t_n &\Brac{ I - 2 \widehat{P}_{T-t_n}^{{\rm loc},r} } Y_{t_n}^{{\rm loc},r} + \sqrt{2 \Delta t_n } \xi_n, \\
    \widehat{P}_{T-t_n}^{{\rm loc},r}(i,j) &:= \widehat{P}_{T-t_n}(i,j) \ones_{|i-j|\leq r}.
\end{split}
\end{equation}
We will use \eqref{eqn:Samp_Loc} to sample the target distribution with different localization radii $r$, and compare it to the reference sampling process \eqref{eqn:Samp_Ref}. Although the localized score $\widehat{s}^{{\rm loc},r} (t,x) = -\widehat{P}_t^{{\rm loc},r} x$ in \eqref{eqn:Samp_Loc} is not the minimzer of $\widehat{\mcL}_N(s_\theta)$ \eqref{eqn:EmpDSM}, it is very close to the minimizer, and it still yields a good approximation. 

\begin{figure}[!t]
\centering
\includegraphics[trim=80 180 65 140, clip, width=4.3cm]{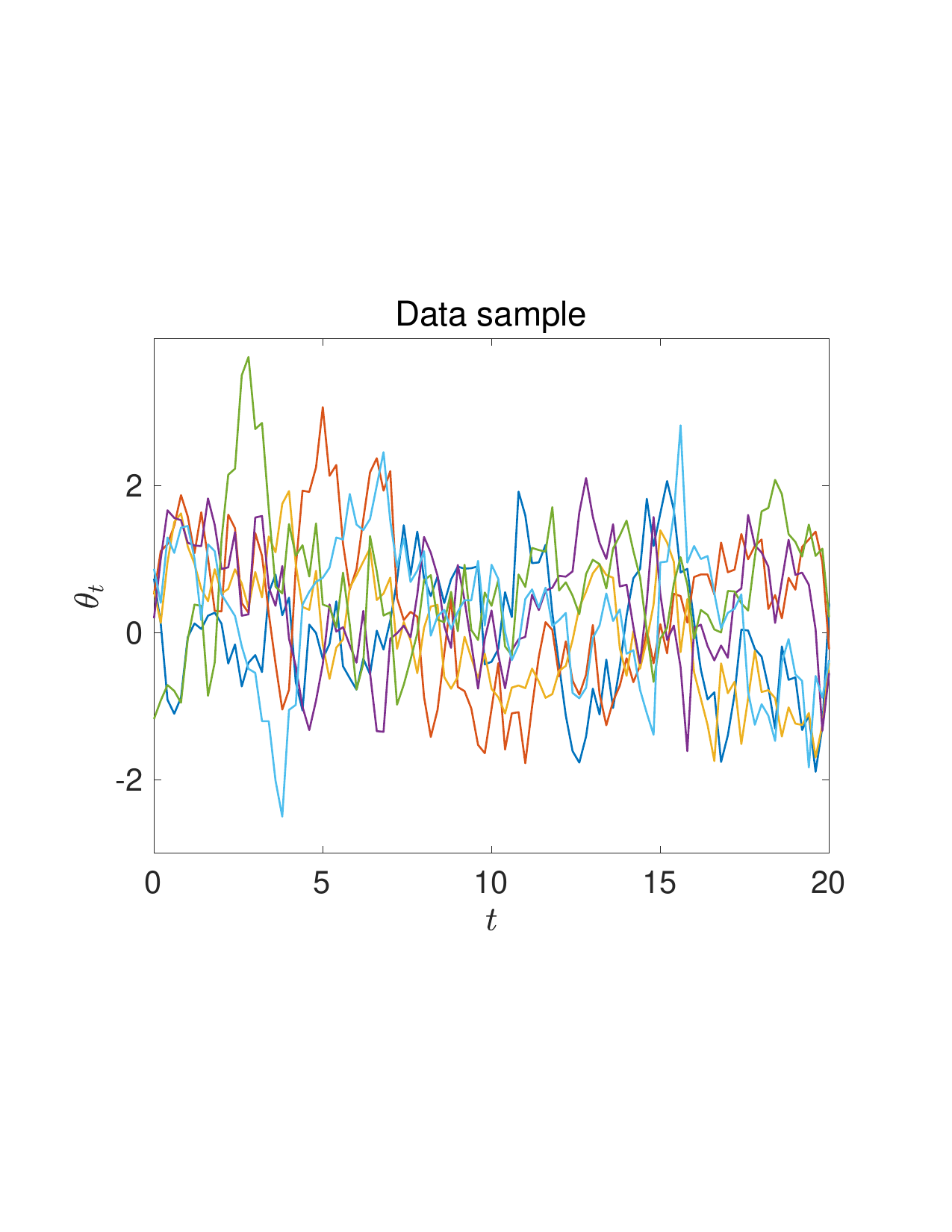} \ 
\includegraphics[trim=80 180 65 140, clip, width=4.3cm]{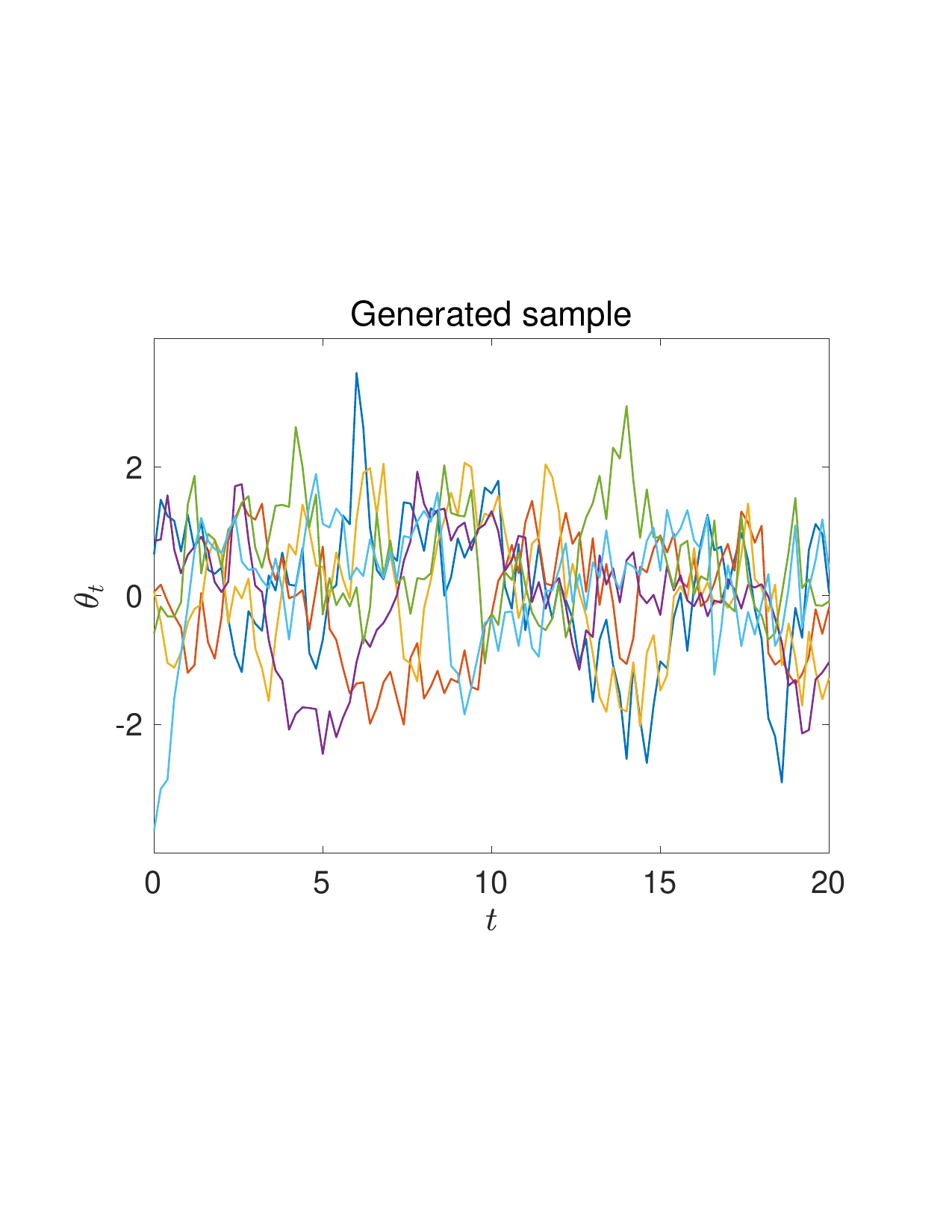} \ 
\includegraphics[trim=30 180 70 160, clip, width=4.3cm]{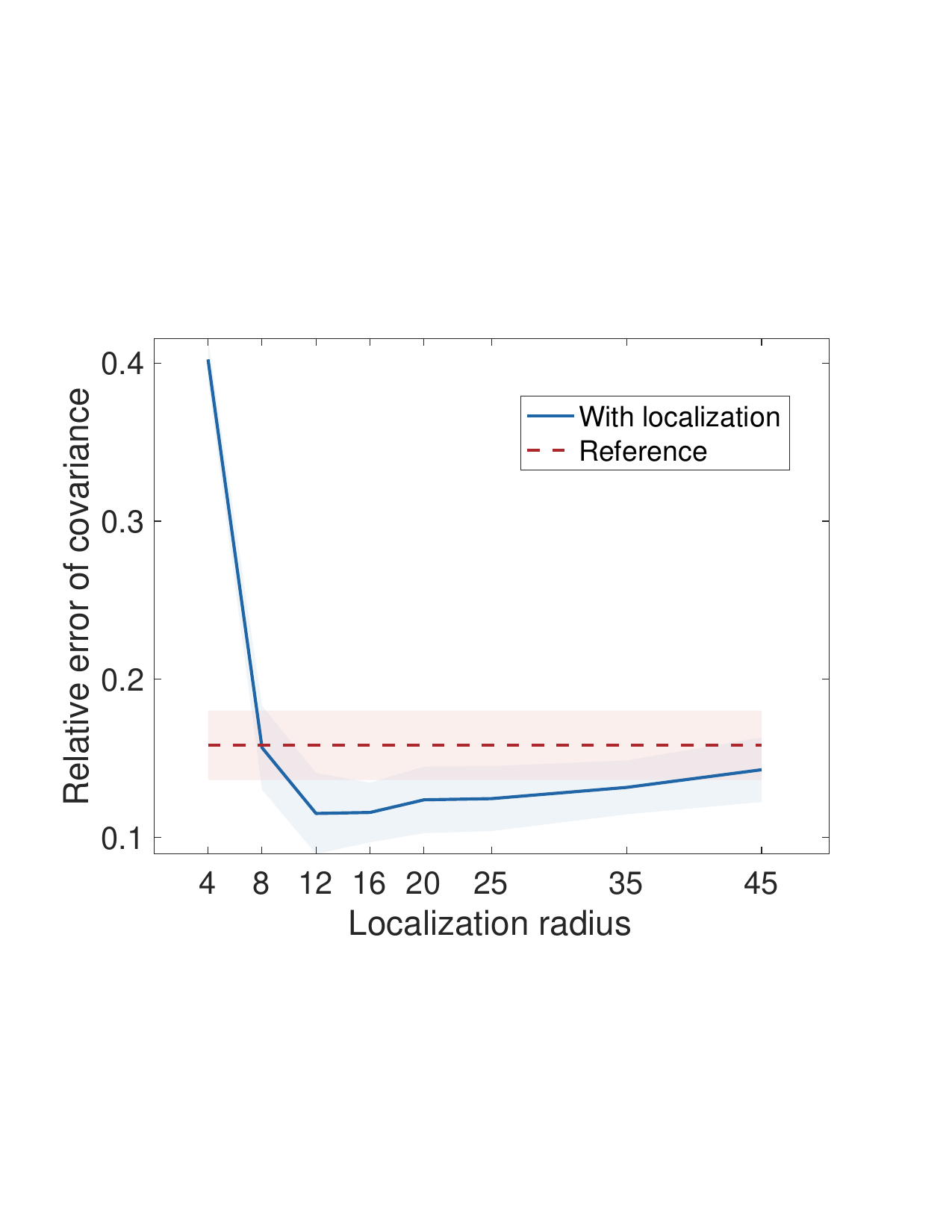} \\
\includegraphics[trim=80 180 65 150, clip, width=4.3cm]{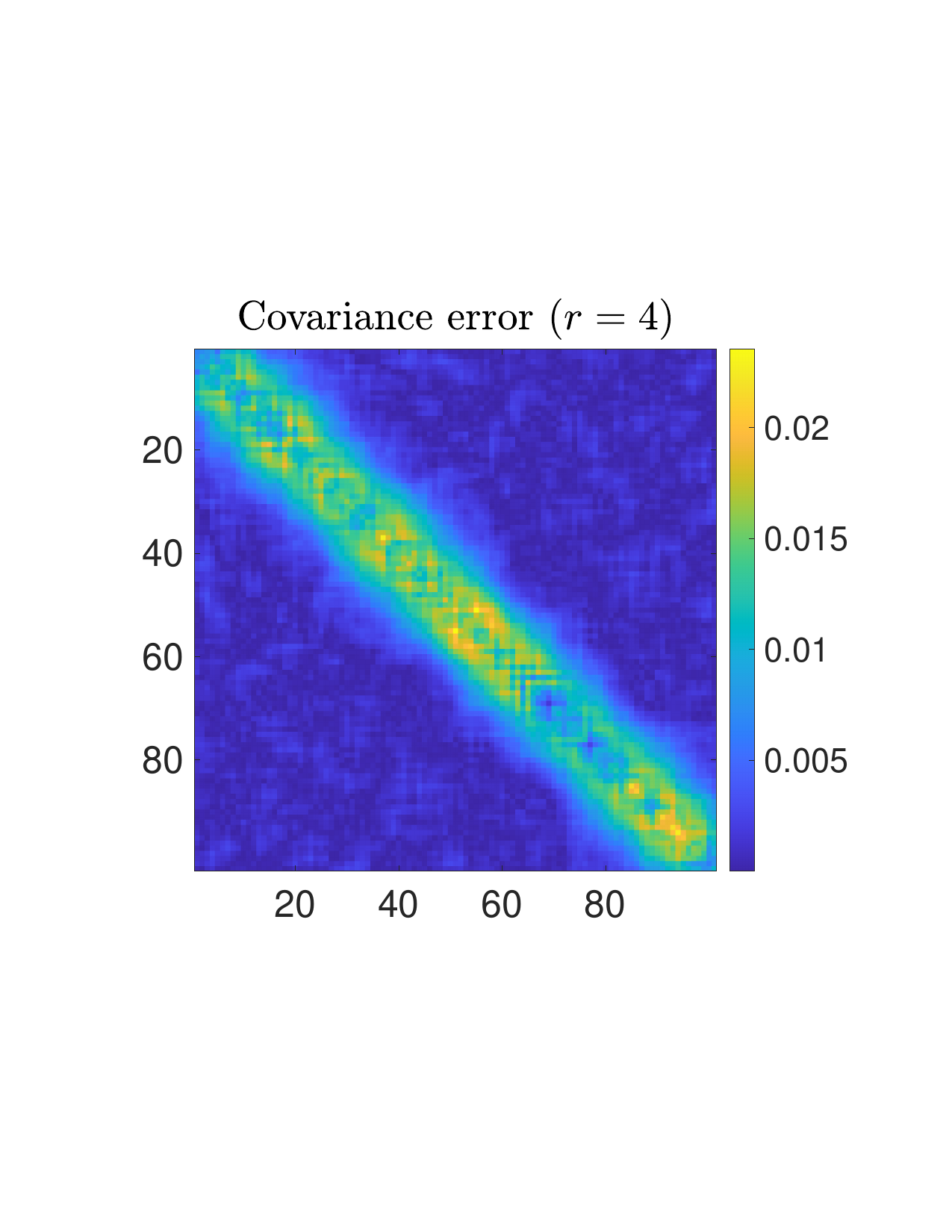}
\includegraphics[trim=80 180 65 150, clip, width=4.3cm]{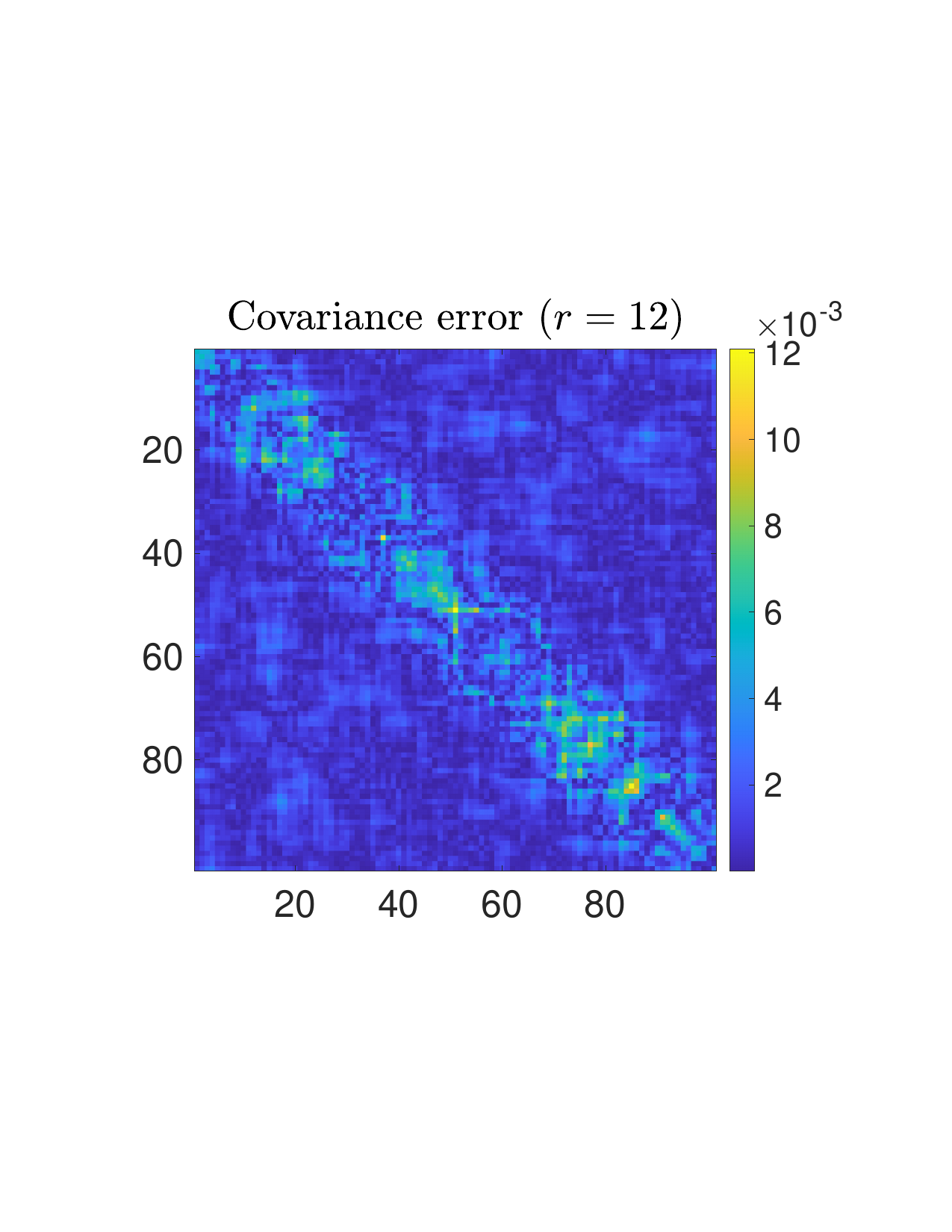}
\includegraphics[trim=80 180 65 150, clip, width=4.3cm]{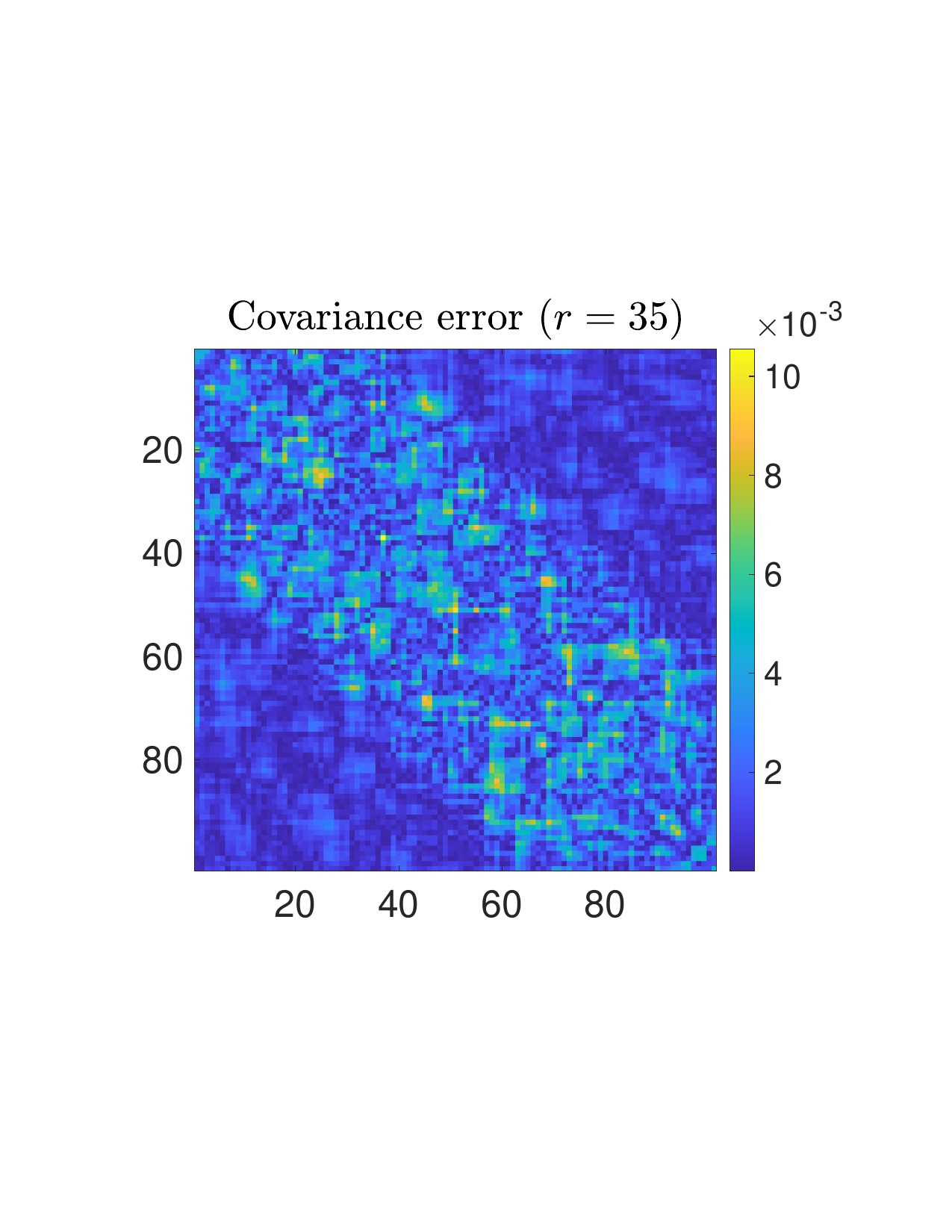}
\caption{Top left: Trajectories directly sampled from OU process. Top middle: Sampled trajectories using the localized sampling process \eqref{eqn:Samp_Loc} with localization radius $r=12$. Top right: Relative $\ell^2$-error \eqref{eqn:CovErr} of the sample covariance for different localization radii $r$; the reference error is from the non-localized sampling process \eqref{eqn:Samp_Ref}. The shaded area denotes the $1\sigma$ region. Bottom: Entrywise error of the sample covariance with different localization radius $r \in \{4,12,35\}$.}
\label{fig:LocErr}
\end{figure}

As all the distributions involved are Gaussian, we can use the sample covariance to measure the localization error. We take data sample size $N = 10^3$ and generated sample size $N_{\rm gen} = 10^4$. The results are shown in \Cref{fig:LocErr}. 
In the top-right plot in \Cref{fig:LocErr}, we measure the relative $\ell^2$-error of the sample covariance 
\begin{equation}    \label{eqn:CovErr}
    {\rm err} := \frac{ \normo{\widehat{C}-C}_2 }{ \norm{C}_2 }, 
\end{equation}
where $C = P_0^{-1}$ is the true covariance, $\widehat{C}$ is the sample covariance of samples from \eqref{eqn:Samp_Ref} or \eqref{eqn:Samp_Loc}, and $\norm{\cdot}_2$ is the matrix $2$-norm. The reference error is computed using the sample covariance of the non-localized backward process \eqref{eqn:Samp_Ref}. For each localization radius, we run $30$ independent experiments (with new data sample) and compute the mean and standard deviation of the relative error. The plot shows that as the localization radius increases, the overall error first decays quickly, and then gradually increases. This is due to the balance between the localization error and the statistical error, as shown more clearly in the bottom plots. 

In the bottom row of \Cref{fig:LocErr}, we plot the entrywise error of the sample covariance (normalized by $\norm{C}_2$) for different localization radii $r$. 
The localization error dominates when the localization radius is small, and we can see that the off-diagonal covariance is not accurately estimated when $r=4$. The off-diagonal part is approximately recovered when $r=12$, and the overall error decreases to minimal. As the localization radius $r$ further increases, the statistical error begins to dominate, leading to spurious long-range correlations as observed in the case $r=35$. This is a well-known phenomenon caused by insufficient sample size \cite{2001MWRv..129..123H}. This suggests a suitable localization radius is important to balance the localization and statistical error to reduce the overall error, validating the result in \Cref{thm:SampComp}.

\subsection{Cox-Ingersoll-Ross model}

We consider the Cox-Ingersoll-Ross (CIR) model \cite{CIR85,CIR85b} 
\begin{equation} 
\mdd X = 2a (b-X)\, \mdd t + \sigma \sqrt{X} \, \mdd W_t ,
\label{e.CIR}
\end{equation}
where $W_t$ is standard one-dimensional Brownian motion. The CIR model \eqref{e.CIR} possesses a closed form solution 
\begin{equation} 
\label{e.CIR_sol}
\frac{X(t)}{c(t)} \sim H(t),\quad c(t)=\frac{\sigma^2}{8a}(1-\mee^{-2a t}),
\end{equation}
where $H(t)$ is a noncentral $\chi$-squared distribution with $8ab /\sigma^2$ degrees of freedom and noncentrality parameter $c(t)^{-1}\mee^{-2a t}X(0)$.\\

We generate artificial data by integrating the CIR model \eqref{e.CIR} with an Euler--Maruyama discretization and a time step of $h=0.01$, sampling at every $\Delta t=1$ time unit. We determine the score from $M=50$ independent sample trajectories, each of length $N=50$, i.e., each trajectory covers $50$ time units. We choose $a=1.136$, $b=1.1$ and $\sigma=0.4205$. 

For the diffusion model we choose a linear variance schedule with $\beta(t) = (\beta_T-\beta_0)t/T + \beta_0$ with $T=0.05$, $\beta_T=0.5$ and $\beta_0=0.0001$, and where we sample the diffusion time $t\in[0,T]$ in steps of $0.001$ diffusion time units. The discount factor is given by $\alpha(t) = 1-\beta(t)$. The score is estimated from $5,000$ randomly selected training points, differing in their uniformly sampled diffusion times and initial training sample. To learn the score function we employ a neural network with $3$ hidden layers of sizes  $2r+2$, $6$ and $3$, respectively, with an input dimension of $2r+2$ coming from the 
localized states of dimension $2r+1$ and the diffusion time. The weights of the neural network are determined by minimizing the MSE error using an Adams optimizer with a learning rate $\eta=0.00005$.

We show in \Cref{fig:compCIR} a comparison of the empirical histograms and the auto-correlation functions of the training data and the data generated by the diffusion model. The histograms are produced from $5,000$ training and generated time series. The auto-correlation function $\langle C(\tau)\rangle$ is computed as an ensemble average over the samples. It is seen that if the localization radius is chosen too small with $r=0$, i.e., assuming a $\delta$-correlated process, the auto-correlation function rapidly decays as the localized diffusion models have no information about the correlations present in the data. Interestingly, the empirical histogram is relatively well approximated even with $r=0$. On the other extreme, for large localization radius $r=20$ the number of independent training samples with $M=50$ is not sufficiently large to generate $N=50$-dimensional samples, and the auto-correlation function exhibits an increased variance.  We found that a localization radius of $r=2$ can be employed to yield excellent agreement of the histogram and the auto-correlation function. We checked that varying the localization radius from $r=2$ to $r=8$ yields similar results. 

\vspace{10pt}
\begin{figure}[!t]
\begin{center}
\includegraphics[width=4.3cm]{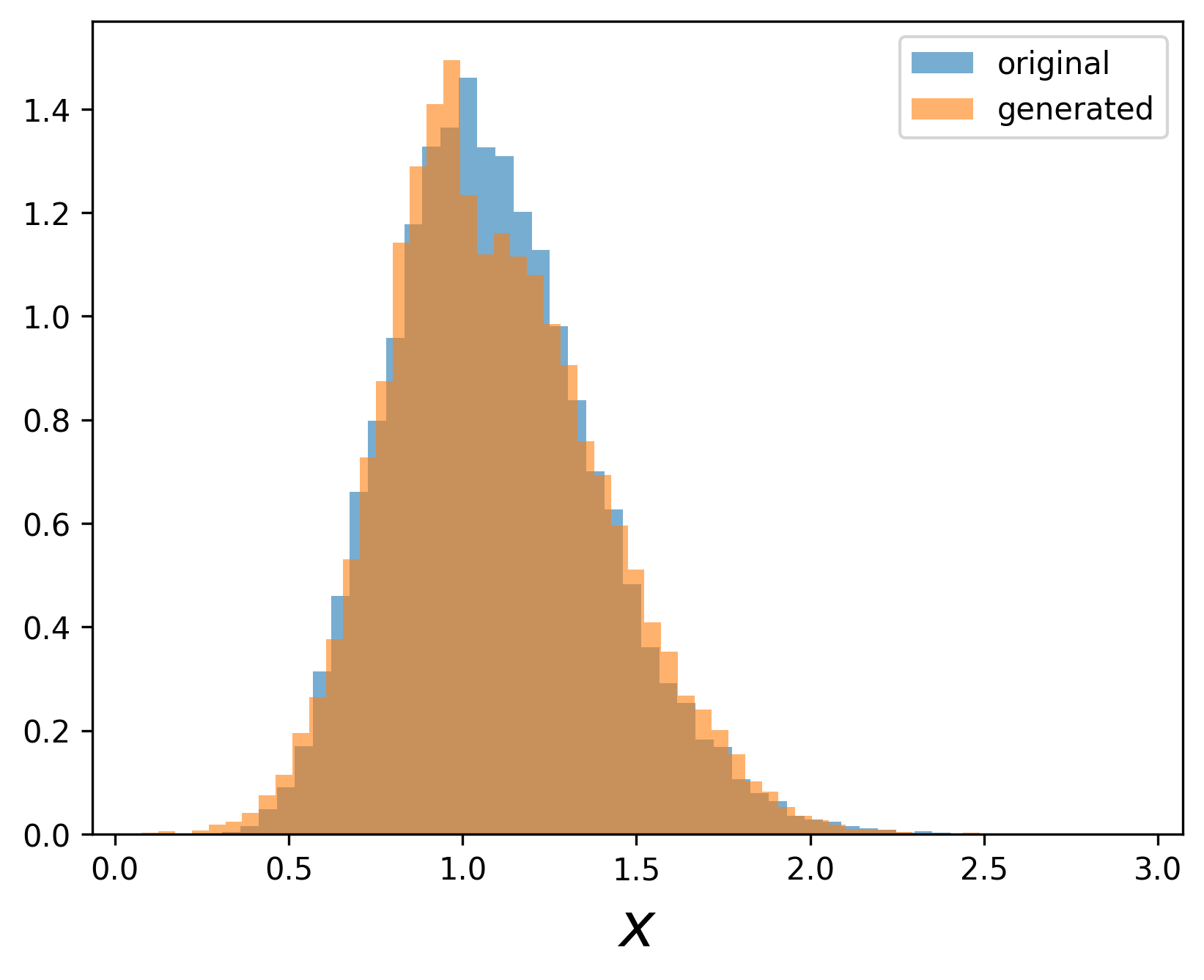}
\includegraphics[width=4.3cm]{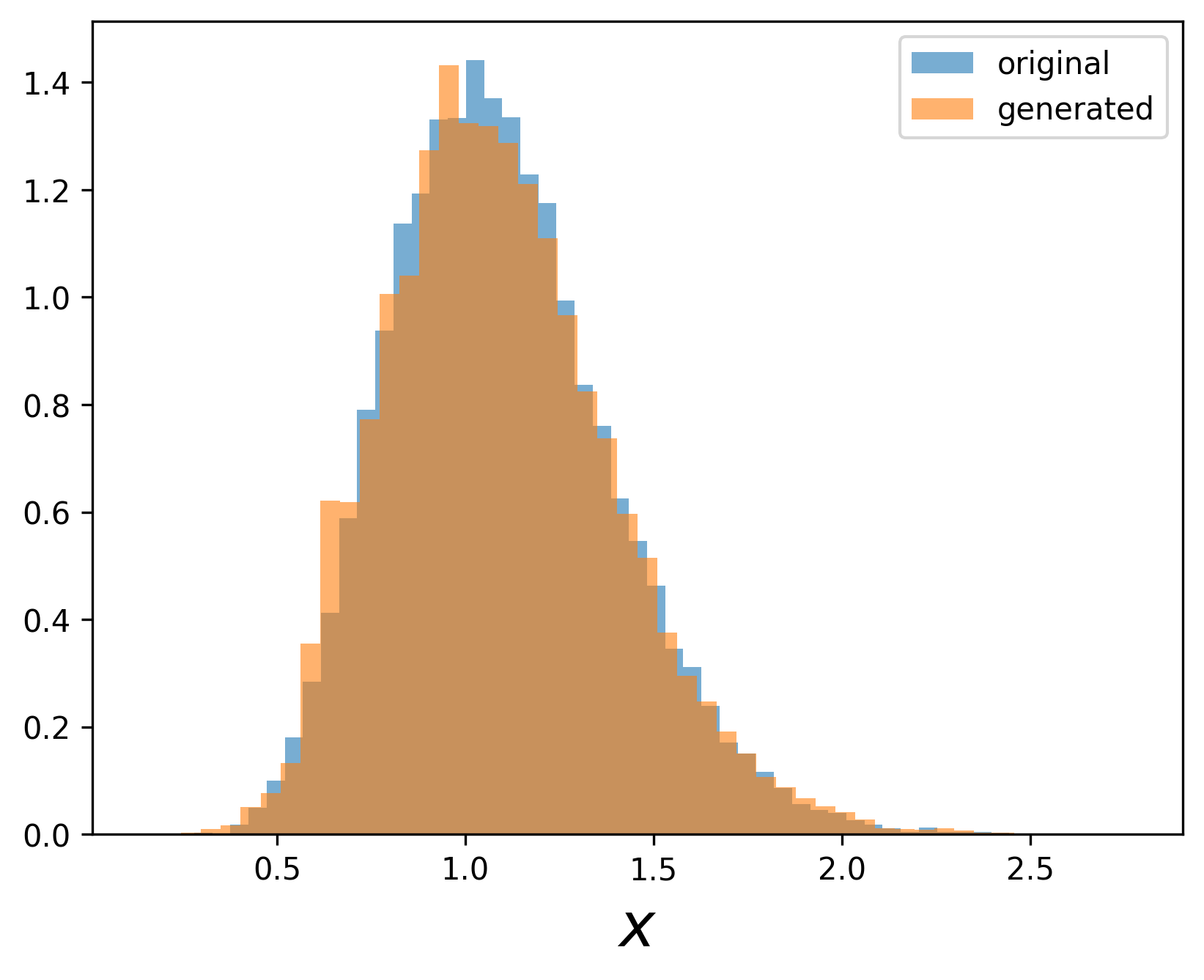}
\includegraphics[width=4.3cm]{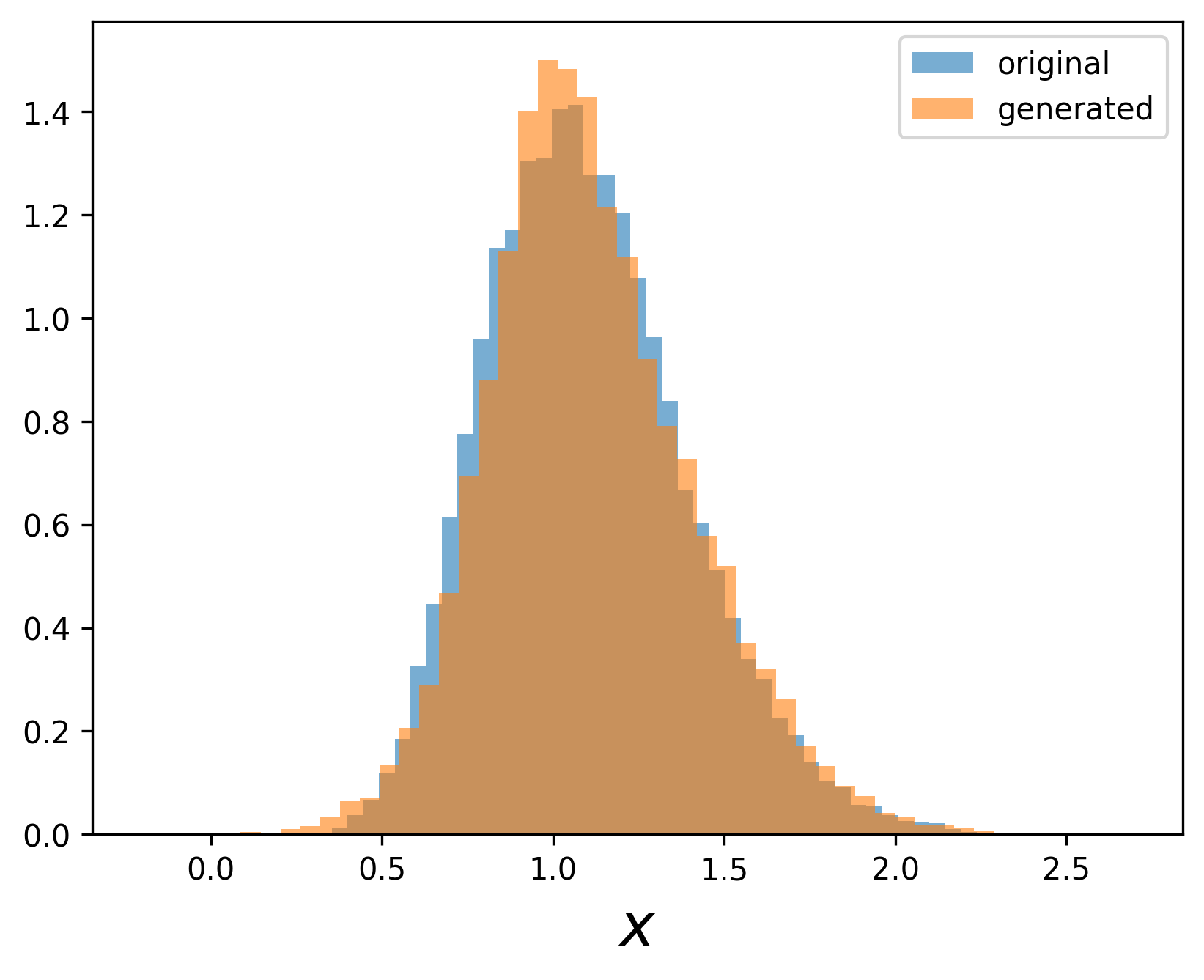}\\
\includegraphics[width=4.3cm]{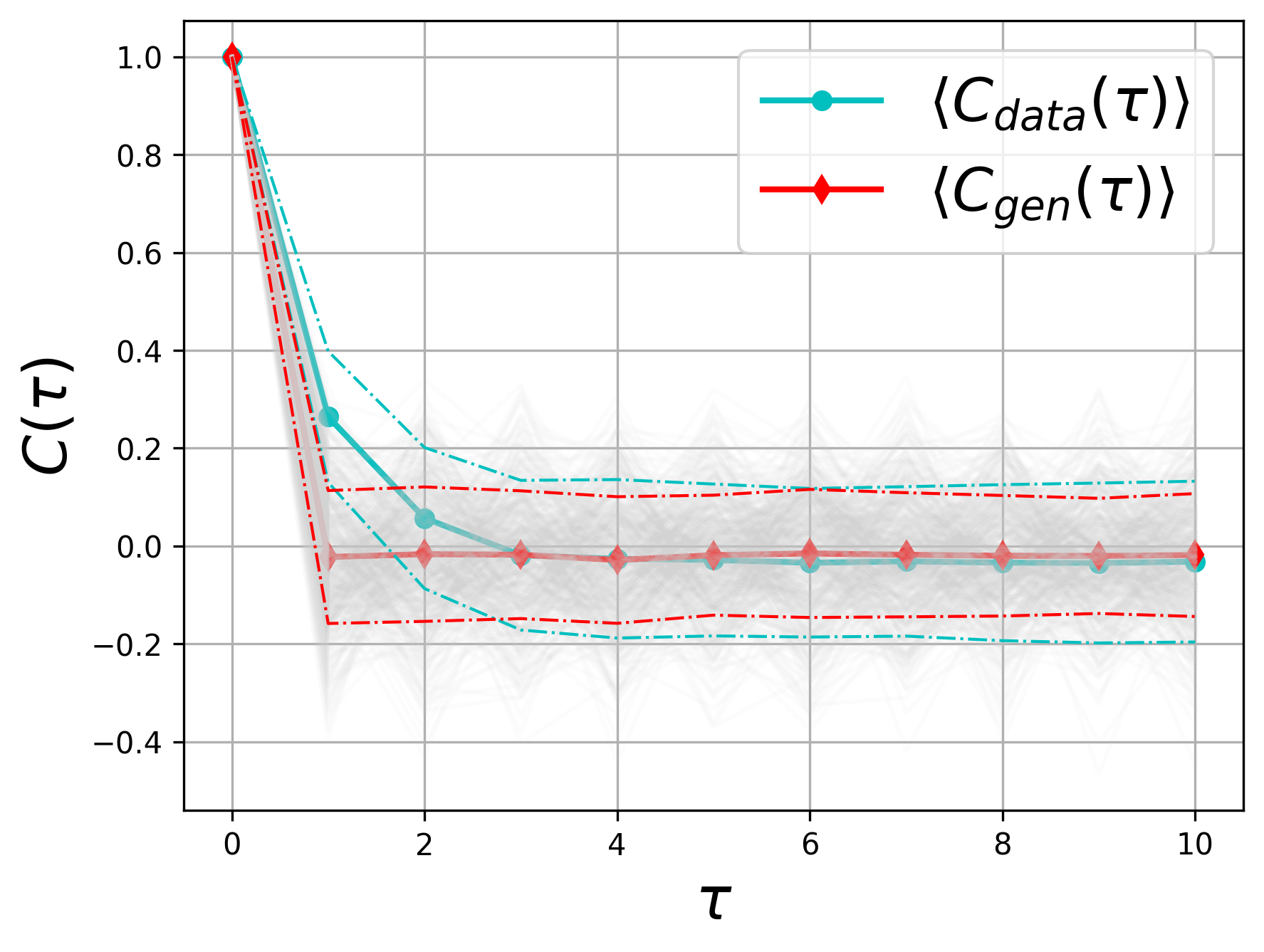}
\includegraphics[width=4.3cm]{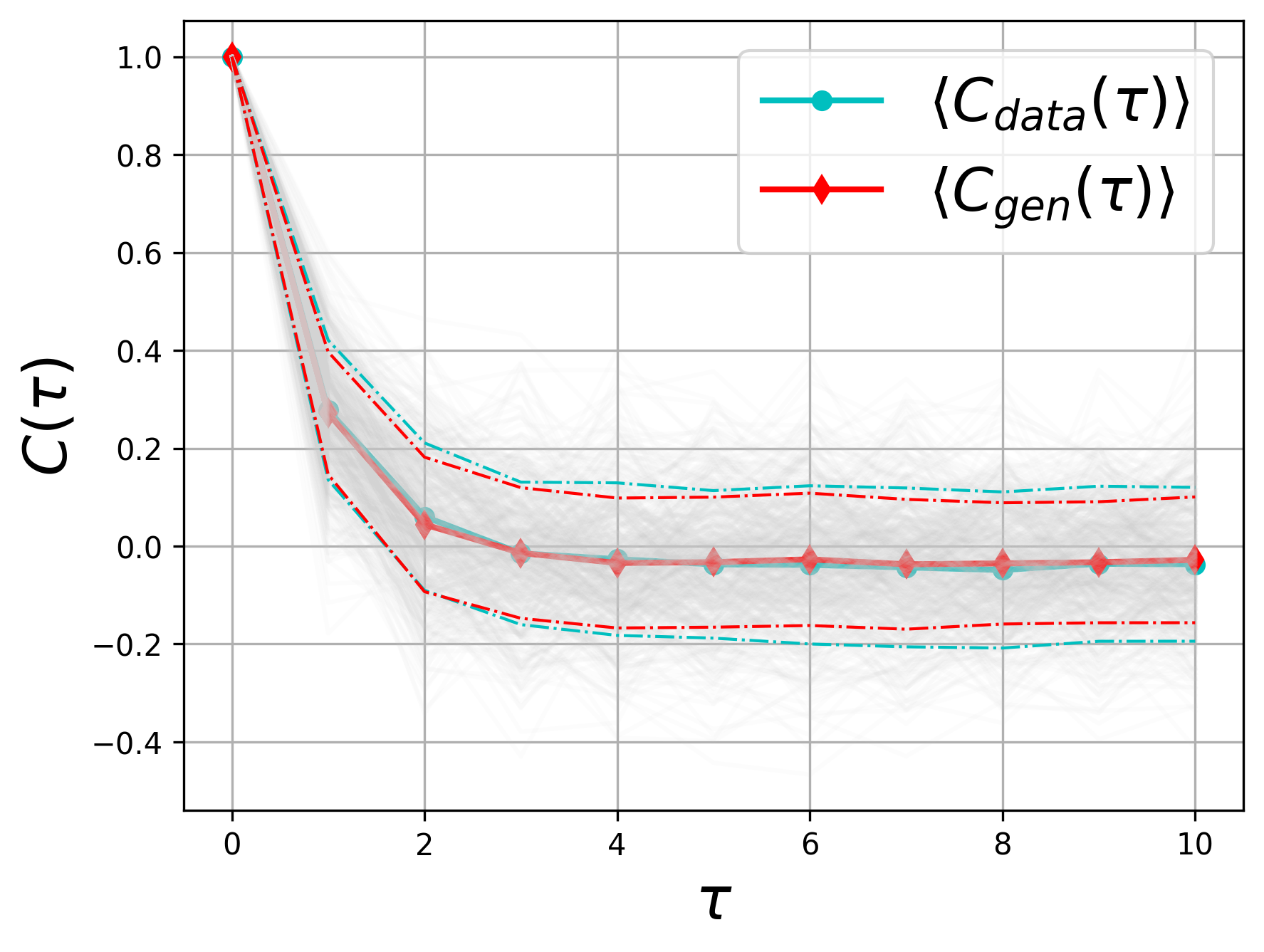}
\includegraphics[width=4.3cm]{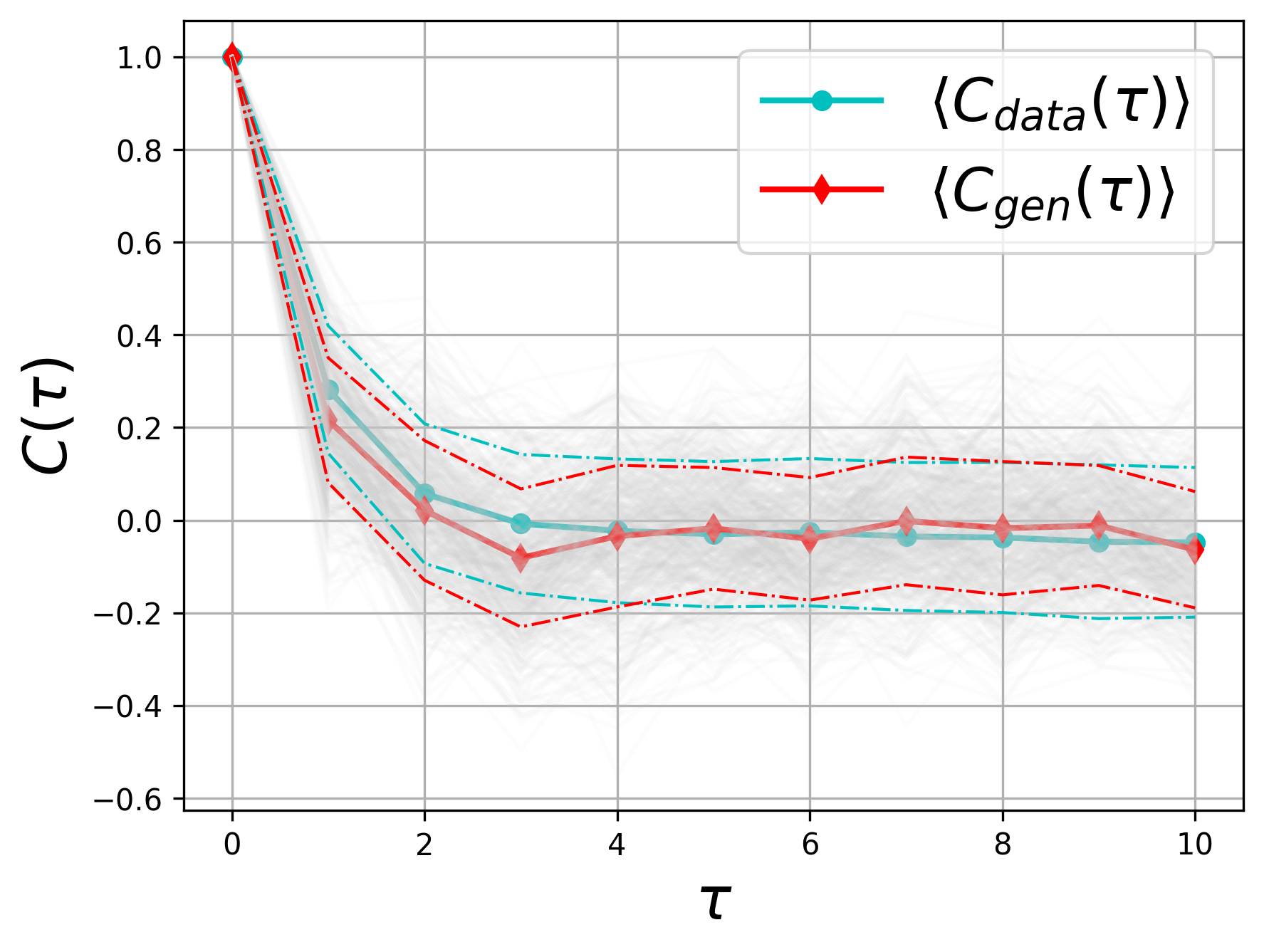}
\caption{Comparison of the data obtained from the original CIR model (\ref{e.CIR}) and from the diffusion model for localization radii $r=0$ (left), $r=2$ (middle) and $r=20$ (right). Top: Empirical histogram. Bottom: Auto-correlation function, averaged over all $2,500$ samples. The dashed lines mark deviations of the sample mean that are $1$ standard deviation away.  The light grey lines show the individual auto-correlation functions of the generated data. 
\label{fig:compCIR}}
\end{center}
\end{figure}
\vspace{-10pt}

For the training we estimate the score function at entry $i$ for $i=2r+1,\ldots, N-2r-1$ from the localized state $(x_r)_i = [x_{i-r},\ldots,x_i,\ldots, x_{i+r}]\in \R^{2r+1}$.
Due to stationarity of the process, each component of the score function $s_i((x_r)_i)$ will be the same except the boundaries, i.e.~$i\leq r$ or $i\geq d-r$. 
This allows us to train a single score function which takes a $(2r+2)$-dimensional input ($2r+1$ for the localized state and $1$ for the diffusion time) to generate a $1$-dimensional output of the score function at location $r < i < d-r$. 
To deal with the boundaries of the time series for $i=1,\dots ,r$ and $i=N-r,\ldots,N$, we pad with the time series $x$, reflected around $i$. 
During the training process we have employed independent noise for each localized region. We have checked that the results do not change if the noise in the diffusion model is kept constant for each local input or if varied when cycling through the localized regions.

\section{Conclusions} 
In this work, we study how locality structure can be exploited in diffusion models to sample high-dimensional distributions. 
We show that the locality structure is approximately preserved in the forward diffusion process, which guarantees that localization error decays exponentially in the localization radius. 
We propose the localized diffusion model, where we learn the score function within a localized hypothesis space by optimizing a localized score matching loss.
We show that the localized diffusion model avoids the curse of dimensionality, and the rate of the statistical error depends on the effective dimension rather than the ambient dimension. 
Through both theoretical analysis and numerical experiments, we demonstrate that a suitable localization radius can balance the localization and statistical error to reduce the overall error. 
This validates the effectiveness of localization method in diffusion models for localized distributions. 

However, several interesting questions remain open. 
First, the locality structure should not rely on the log-concavity of the distributions, and it would be interesting to extend the theoretical results to non-log-concave distributions. 
Second, designing of localized hypothesis space requires prior knowledge of the locality structure. Although it can be learned by many existing methods, it would be interesting to investigate how to combine them, or even learn the locality structure adaptively in the diffusion model. 
We leave these questions for future work. 

\smallskip

\begin{appendix}

\section{Proofs in \Cref{Sec:DMnLoc}}   \label{App:DMnLoc}
\subsection{Proof of \Cref{thm:wLocDist}}   \label{App:wLocDist}
\begin{proof}
Recall 
\[
    p_t(x_t) = \int \GN ( x_t; \alpha_t x_0, \sigma_t^2 I ) p_0(x_0) \mdd x_0. 
\]
We first compute the Hessian of the log density of $p_t$: 
\begin{align*}
    \nabla^2 \log p_t(x_t) =~& \frac{ \nabla^2 p_t(x_t) }{ p_t(x_t) } - \frac{\nabla p_t(x_t)}{ p_t(x_t)} \frac{\nabla p_t(x_t) \matT}{ p_t(x_t)} \\
    =~& \frac{ 1 }{ p_t(x_t) } \int \Brac{ - \frac{ x_t - \alpha_t x_0}{ \sigma_t^2 } } \Brac{ - \frac{ x_t - \alpha_t x_0}{ \sigma_t^2 } }\matT \GN ( x_t; \alpha_t x_0, \sigma_t^2 I ) p_0(x_0) \mdd x_0 \\
    &- \frac{ 1 }{ p_t(x_t) } \int \Brac{ - \frac{ x_t - \alpha_t x_0}{ \sigma_t^2 } } \GN ( x_t; \alpha_t x_0, \sigma_t^2 I ) p_0(x_0) \mdd x_0 \\
    &\cdot \frac{ 1 }{ p_t(x_t) } \int \Brac{ - \frac{ x_t - \alpha_t x_0}{ \sigma_t^2 } }\matT \GN ( x_t; \alpha_t x_0, \sigma_t^2 I ) p_0(x_0) \mdd x_0 \\
    =~& \sigma_t^{-4} \mE_{p_{0|t}(x_0|x_t)} \Brac{ x_t - \alpha_t x_0 } \Brac{ x_t - \alpha_t x_0 }\matT \\
    &- \sigma_t^{-4} \mE_{p_{0|t}(x_0|x_t)} \Brac{ x_t - \alpha_t x_0 } \mE_{p_{0|t}(x_0|x_t)} \Brac{ x_t - \alpha_t x_0 }\matT \\
    =~& \sigma_t^{-4} \Cov_{p_{0|t}(x_0|x_t)} \Brac{ x_t - \alpha_t x_0, x_t - \alpha_t x_0 } \\
    =~& \alpha_t^2 \sigma_t^{-4} \Cov_{p_{0|t}(x_0|x_t)} \Brac{ x_0, x_0 },
\end{align*}
where $p_{0|t}(x_0|x_t)$ is the distribution of $x_0$ conditioned on the value of $x_t$. 
As a consequence
\begin{equation}    \label{eqn:pf_HessPt}
    \nabla_{ij}^2 \log p_t(x_t) = \alpha_t^2 \sigma_t^{-4} \Cov_{p_{0|t}(x_0|x_t)} \Brac{ x_{0,i}, x_{0,j} }. 
\end{equation}
Consider the conditional distribution $p_{0|t}(x_0|x_t)$, whose log density is 
\[
    \log p_{0|t}(x_0|x_t) = - \log p_t(x_t) + \log p_0(x_0) - \frac{1}{2\sigma_t^2} \norm{ x_t - \alpha_t x_0 }^2 - \frac{d}{2} \log (2\pi \sigma_t^2).
\]
Fix $x_t$, and denote for simplicity $q(x) = p_{0|t}(x|x_t)$. Then 
\[
    \nabla^2 \log q(x) = \nabla^2 \log p_0(x) - \frac{\alpha_t^2}{\sigma_t^2} I. 
\]
Note by assumption, $\nabla_{ij}^2 \log p_0 = 0$ if $i \notin \mcN_j$, and $m I \preceq - \nabla^2 \log p_0 \preceq M I$. So that
\[
    \forall i\notin \mcN_j, \quad \nabla_{ij}^2 \log q(x) = 0. 
\]
\begin{equation}    \label{eqn:pf_PtBound}
    \Brac{ m + \frac{\alpha_t^2}{\sigma_t^2} } I \preceq - \nabla^2 \log p_0 \preceq \Brac{ M + \frac{\alpha_t^2}{\sigma_t^2} } I.
\end{equation}
By \Cref{prop:CorrExpDecay}, for any Lipschitz functions $f,g$, we have 
\[
    \norme{ \Cov_{q(x)} \Brac{ f(x_i), g(x_j) } } \leq \norme{f}_\Lip \norme{g}_\Lip \Brac{ m + \frac{\alpha_t^2}{\sigma_t^2} }^{-1} \Brac{ 1 - \frac{ m \sigma_t^2 + \alpha_t^2 }{ M \sigma_t^2 + \alpha_t^2 } }^{\sfd_G(i,j)}. 
\]
Recall \eqref{eqn:pf_HessPt}, and by definition of the matrix norm, 
\[
    \norm{ \nabla_{ij}^2 \log p_t(x_t) } = \sup_{\norm{t_i} = \norm{t_j} = 1} t_i\matT \nabla_{ij}^2 \log p_t(x_t) t_j = \sup_{\norm{t_i} = \norm{t_j} = 1} \alpha_t^2 \sigma_t^{-4} \Cov_{q(x)} \Brac{ t_i\matT x_i, t_j\matT x_j } .
\]
Take $f(x_i) = t_i\matT x_i$ and $g(x_j) = t_j\matT x_j$, and note $\norme{f}_\Lip = \norme{g}_\Lip = 1$, we obtain 
\[
    \norm{ \nabla_{ij}^2 \log p_t(x_t) } \leq \alpha_t^2 \sigma_t^{-4} \Brac{ m + \frac{\alpha_t^2}{\sigma_t^2} }^{-1} \Brac{ 1 - \frac{ m \sigma_t^2 + \alpha_t^2 }{ M \sigma_t^2 + \alpha_t^2 } }^{\sfd_G(i,j)}.
\]
The conclusion follows by noting the above bound holds for all $x$. 
\end{proof}

\subsection{Proof of \Cref{prop:CorrExpDecay}}   \label{App:CorrExpDecay}
\begin{proof}
By subtracting the mean, we assume w.l.o.g. that $\mE_{p(x)} [f(x_i)] = \mE_{p(x)} [g(x_j)] = 0$. Then 
\[
    \Cov_{p(x)} \Brac{ f(x_i), g(x_j) } = \int f(x_i) g(x_j) p(x) \mdd x.
\]
Consider the marginal Stein equation \cite{cui2025stein} 
\[
    - \Delta u_f (x) - \nabla \log p(x) \cdot \nabla u_f(x) = f(x_i). 
\]
By \Cref{lem:GradEsti}, the following gradient estimate of $u_f$ holds: 
\[
    \norm{ \nabla_j u_f }_\infty \leq \frac{1}{m} \Brac{ 1 - \frac{m}{M} }^{\sfd_G(i,j)} \norme{ f }_\Lip.
\]
By integration by parts, it holds that
\begin{align*}
    \int f(x_i) g(x_j) p(x) \mdd x =~& \int \Brac{ - \Delta u_f (x) - \nabla \log p(x) \cdot \nabla u_f(x) } g(x_j) p(x) \mdd x \\
    =~& \int \nabla u_f (x) \cdot \nabla_x g(x_j) p(x) \mdd x \\
    &+ \int \nabla u_f (x) \cdot \nabla p(x) g(x_j) \mdd x - \int \nabla u_f (x) \cdot \nabla \log p(x) g(x_j) p(x) \mdd x \\
    =~& \int \nabla_j u_f (x) \cdot \nabla g(x_j) p(x) \mdd x. 
\end{align*}
Here we use $\nabla_{x_i} g(x_j) = 0$ if $i\neq j$. Combined, we obtain 
\begin{align*}
    \norme{ \Cov_{p(x)} \Brac{ f(x_i), g(x_j) } } =~& \norme{ \int \nabla_j u_f (x) \cdot \nabla g(x_j) p(x) \mdd x } \\
    \leq~& \int \norm{ \nabla_j u_f (x) } \norm{ \nabla g(x_j) } p(x) \mdd x \leq \frac{1}{m} \Brac{ 1 - \frac{m}{M} }^{\sfd_G(i,j)} \norme{ f }_\Lip \norme{g}_\Lip. 
\end{align*}
This completes the proof. 
\end{proof}

\begin{lem}    \label{lem:GradEsti}
Suppose $p$ is localized w.r.t.~an undirected graph $G$ and is log-concave and smooth, i.e., $\exists 0<m\leq M <\infty$ s.t.~$m I \preceq - \nabla^2 \log p(x) \preceq M I$. For any $i$ and Lipschitz function $f: \mR^{d_i} \to \mR$, consider the marginal Stein equation 
\begin{equation}
    - \Delta_p u_f (x) := - \Delta u_f (x) - \nabla \log p(x) \cdot \nabla u_f(x) = f(x_i) - \mE_{p(x)} [f(x_i)]. 
\end{equation}
The following gradient estimate holds: 
\begin{equation}
    \norm{ \nabla_j u_f }_\infty \leq \frac{1}{m} \Brac{ 1 - \frac{m}{M} }^{\sfd_G(i,j)} \norme{ f }_\Lip.
\end{equation}
\end{lem}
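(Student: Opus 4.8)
The plan is to represent $u_f$ via the probabilistic (resolvent) formula attached to the overdamped Langevin diffusion $\mdd X_t = \nabla\log p(X_t)\,\mdd t + \sqrt{2}\,\mdd W_t$, whose generator is exactly $\Delta_p$, and then to extract the graph-distance decay from the block-sparsity of $\nabla^2\log p$. Writing $g(x) := f(x_i) - \mE_{p}[f(x_i)]$, the $m$-log-concavity of $p$ makes this diffusion exponentially ergodic, so $u_f(x) := \int_0^\infty \mE_x[g(X_t)]\,\mdd t$ is well defined and solves $-\Delta_p u_f = g$. Differentiating in the initial condition gives
\[
  \nabla_j u_f(x) = \int_0^\infty \mE\Rectbrac{ (J_t^{ij})\matT\, \nabla f(X_{t,i}) }\,\mdd t,
\]
where $J_t := \partial X_t/\partial x$ is the tangent flow, $J_t^{ij}\in\mR^{d_i\times d_j}$ its $(i,j)$-block (blocks indexed by $V$), and $X_{t,i}$ the $i$-th block of $X_t$. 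Since a Lipschitz $f$ satisfies $\normo{\nabla f}_\infty \le \normo{f}_\Lip$, this already yields the \emph{pointwise} estimate $\normo{\nabla_j u_f(x)} \le \normo{f}_\Lip \int_0^\infty \mE\Rectbrac{\normo{J_t^{ij}}}\,\mdd t$, so the task reduces to bounding $\int_0^\infty \normo{J_t^{ij}}\,\mdd t$, which I will do \emph{pathwise}.

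Next I would analyze the tangent flow. It obeys the variational equation $\dot J_t = \nabla^2\log p(X_t)\,J_t = -B(X_t)\,J_t$ with $J_0 = I$, where $B := -\nabla^2\log p$ satisfies $mI \preceq B \preceq MI$ and, because $p$ is localized w.r.t.~$G$, carries the block-sparsity pattern of $G$. Write $B(X_t) = MI - R_t$ with $0 \preceq R_t \preceq (M-m)I$; since $E$ contains all self-loops, $R_t$ is still $G$-sparse. Then $\widehat J_t := \mee^{Mt} J_t$ solves $\dot{\widehat J}_t = R_t\widehat J_t$, $\widehat J_0 = I$, and so equals the absolutely convergent Dyson series $\widehat J_t = \sum_{n\ge 0}\int_{0<s_n<\cdots<s_1<t} R_{s_1}\cdots R_{s_n}\,\mdd s_n\cdots\mdd s_1$. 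Two observations then close the estimate: (i) the $(i,j)$-block of $R_{s_1}\cdots R_{s_n}$ is a sum over length-$n$ walks in $G$ from $i$ to $j$, hence vanishes whenever $n<\sfd_G(i,j)$; and (ii) $\normo{R_{s_1}\cdots R_{s_n}} \le (M-m)^n$ while the simplex $\{0<s_n<\cdots<s_1<t\}$ has volume $t^n/n!$, so the $n$-th term has norm at most $((M-m)t)^n/n!$. Hence $\normo{J_t^{ij}} \le \mee^{-Mt}\sum_{n\ge\sfd_G(i,j)}((M-m)t)^n/n!$, and integrating term by term with $\int_0^\infty t^n\mee^{-Mt}\,\mdd t = n!/M^{n+1}$ gives
\[
  \int_0^\infty \normo{J_t^{ij}}\,\mdd t \;\le\; \frac{1}{M}\sum_{n\ge\sfd_G(i,j)}\Brac{1-\frac{m}{M}}^n \;=\; \frac{1}{m}\Brac{1-\frac{m}{M}}^{\sfd_G(i,j)},
\]
which combined with the pointwise estimate is exactly the asserted bound.

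I expect the substantive work to be regularity bookkeeping rather than this computation: one must justify differentiating under the expectation, the variational equation itself, and the convergence of $\int_0^\infty \mE_x[g(X_t)]\,\mdd t$ when $f$ is merely Lipschitz (so $\nabla f$ exists only a.e.) and $\nabla\log p$ is Lipschitz but no smoother. I would dispatch these by mollifying $f$ (and, if needed, $\log p$), establishing the estimate in the smooth case, and passing to the limit — legitimate since the final bound depends on $f$ only through $\normo{f}_\Lip$ — together with the standard stability theory of SDE flows under Lipschitz coefficients; the synchronous-coupling contraction $\mE\normo{X_t^x - X_t^y} \le \mee^{-mt}\normo{x-y}$ both powers the limiting argument and supplies the exponential decay of $\mE_x[g(X_t)]$ and of $\normo{J_t}$ used as a dominating function. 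An alternative that sidesteps the flow entirely is to differentiate the Stein equation into the vector Poisson equation $(-\Delta_p + B)\,\nabla u_f = \nabla g$, split $-\Delta_p + B = (-\Delta_p + MI) - (MI - B)$ into a block-diagonal ``dominant'' operator plus a $G$-sparse remainder of $L^\infty\!\to\!L^\infty$ operator norm $\le 1-m/M$, and Neumann-expand: support propagation of the $G$-sparse remainder again produces the $\sfd_G(i,j)$ delay and the same geometric sum.
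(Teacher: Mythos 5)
Your proposal is correct and follows essentially the same route as the paper's proof: represent $u_f$ through the Langevin semigroup whose generator is $\Delta_p$, differentiate in the initial condition, and bound the $(i,j)$-block of the tangent flow using the $G$-sparsity of $-\nabla^2\log p$ together with $mI \preceq -\nabla^2\log p \preceq MI$, then integrate in time to get $\frac{1}{m}(1-\frac{m}{M})^{\sfd_G(i,j)}\norme{f}_\Lip$. The only cosmetic difference is that you derive the delayed-series bound $\normo{J_t^{ij}} \le \mee^{-Mt}\sum_{n\ge \sfd_G(i,j)}((M-m)t)^n/n!$ self-containedly via the Dyson expansion, whereas the paper obtains the same estimate by citing Lemma 6.2 of \cite{cui2024l_inf}.
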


\begin{proof}
The proof is based on a refined analysis of that in \cite{cui2025stein}. Note $\Delta_p = \Delta + \nabla \log p \cdot \nabla $ is the generator of the Langevin dynamics 
\begin{equation}    \label{eq:pf_Langevin}
    \mdd X_t^x = \nabla \log p(X_t^x ) \mdd t + \sqrt{2} \mdd W_t, \quad X_0^x = x.
\end{equation}
The Stein equation with such generator type operators is known to admit explicit solutions \cite{MR1035659}: 
\[
    u_f(x) = - \int_0^\infty \mE \Brac{ f(X_{t,i}^x) - \mE_\pi[f(x_i)] } \mdd t. 
\]
See also \cite{cui2025stein} for a detailed proof. Differentiating w.r.t.~$x_j$ gives
\[
    \nabla_j u_f(x) = - \int_0^\infty \mE \Rectbrac{ \nabla_j X_{t,i}^x \cdot \nabla f (X_{t,i}^x) } \mdd t. 
\]
Here $\nabla_j X_{t,i}^x$ is the partial derivative w.r.t.~$x_j$ of the sample path. Note taking derivative on both sides is valid due to the exponential decay of $\nabla_j X_{t,i}^x$. Since $f$ is Lipschitz, we obtain
\begin{equation}    \label{eq:pf_GradCtrl}
    \norm{ \nabla_j u_f(x) } \leq \int_0^\infty \mE \Rectbrac{ \normo{ \nabla_j X_{t,i}^x } \norm{ \nabla f (X_{t,i}^x) }  } \mdd t \leq \norme{f}_\Lip \int_0^\infty \mE \normo{ \nabla_j X_{t,i}^x } \mdd t.
\end{equation}
So that it remains to control $\nabla_j X_{t,i}^x$. Differentiating w.r.t.~$x$ in \eqref{eq:pf_Langevin}, we obtain
\[
    \mdd \nabla X_t^x = - H_t \cdot \nabla X_t^x \mdd t, \quad H_t := - \nabla^2 \log p(X_t^x).
\]
Denote $\sfG_t = \mee^{mt} \nabla X_t^x$ and $\sfH_t = H_t - m I $, then it holds that
\[
    \diff{}{t} \sfG_t = \mee^{mt} \Brac{ m \nabla X_t^x - H_t \nabla X_t^x } = - \sfH_t \sfG_t, \quad \sfG_0 = \nabla X_0^x = I.
\]
By assumption, $0 \preceq \sfH_t \preceq (M-m) \sfI$, and $\sfH_t$ has dependency graph $G$. By Lemma 6.2 in \cite{cui2025stein}, 
\[
    \normo{ \nabla_j X_t^x } = \mee^{-mt} \normo{ \sfG_t(i,j) } \leq \mee^{-Mt} \sum_{k=\sfd_G(i,j)}^\infty \frac{t^k (M-m)^k}{k!}.
\]
Recall \eqref{eq:pf_GradCtrl}, this implies 
\begin{align*}
    \norm{ \nabla_j u_f(x) } \leq~& \norme{f}_\Lip \int_0^\infty \mE \normo{ \nabla_j X_{t,i}^x } \mdd t \\
    \leq~& \norme{f}_\Lip \int_0^\infty \mee^{-Mt} \sum_{k=\sfd_G(i,j)}^\infty \frac{t^k (M-m)^k}{k!} \mdd t \\
    =~& \norme{f}_\Lip \frac{1}{M} \sum_{k=\sfd_G(i,j)}^\infty \Brac{ 1 - \frac{m}{M} }^k = \frac{1}{m} \Brac{ 1 - \frac{m}{M} }^{\sfd_G(i,j)} \norme{f}_\Lip.
\end{align*}
The conclusion follows by noting the above bound holds for all $x$.
\end{proof}

\section{Proofs in \Cref{Sec:LocDM}}   \label{App:LocDM}
\subsection{Proof of \Cref{prop:ErrDecomp}}   \label{App:ErrDecomp}
\begin{proof}
Denote the path measures for the reverse process \eqref{eqn:OUrev} and the sampling process \eqref{eqn:SamplSDE} as $\sfQ$ and $\widehat{\sfQ}$ respectively, i.e., $\sfQ_t = \Law (Y_t), \widehat{\sfQ}_t = \Law (\widehat{Y}_t)$. By the data-processing inequality, we have 
\[
    \KL ( p_\tb \| \widehat{q}_{T-\tb} ) = \KL ( \sfQ_{T-\tb} \| \widehat{\sfQ}_{T-\tb} ) \leq \KL ( \sfQ_{[0,T-\tb]} \| \widehat{\sfQ}_{[0,T-\tb]} ).
\]
By the Girsanov theorem \cite{MR3155209}, we have
\begin{align*}
    \KL ( \sfQ_{[0,T-\tb]} \| \widehat{\sfQ}_{[0,T-\tb]} ) =~& \KL ( \sfQ_0 \| \widehat{\sfQ}_0 ) + \int_0^{T-\tb} \mE_{y_t \sim \sfQ_t} \Rectbrac{ \norm{ \widehat{s}(y_t,T-t) - s(y_t,T-t) }^2 } \mdd t \\
    =~& \KL ( p_T \| \GN(0,I) ) + \int_\tb^T \mE_{x_t\sim p_t} \Rectbrac{ \norm{ \widehat{s}(x_t,t) - s(x_t,t) }^2 } \mdd t. 
\end{align*}
By the convergence of the OU process \cite{MR3155209}, we have \[
    \KL ( p_T \| \GN(0,I) ) \leq \mee^{-2T} \KL ( p_0 \| \GN(0,I) ).
\]
The conclusion follows by combining the above relations. 
\end{proof}

\subsection{Proof of \Cref{thm:LocErr}}   \label{App:LocErr}
\begin{proof}
Note the optimal solution is given by \eqref{eqn:OptLocScore}, i.e.,
\[
    s_j^*(x,t) = \mE_{x'\sim p_t} \Rectbrac{ \nabla_j \log p_t(x') \Big| x_{\mcN_j^r}' = x_{\mcN_j^r} }. 
\]
By \eqref{eqn:pf_PtBound}, $p_t$ is $\Brac{m+\frac{\alpha_t^2}{\sigma_t^2}}$-strongly log-concave, so that the conditional distribution $p_t(x_{\mcN_j^{r_\bot}}|x_{\mcN_j^r})$ is also $\Brac{m+\frac{\alpha_t^2}{\sigma_t^2}}$-strongly log-concave. By the Poincar\'e inequality \cite{MR3155209},  
\begin{align*}
    & \norm{ s_j^*(x,t) - s_j (x,t) }_{L^2(p_t)}^2 = \mE_{x_{\mcN_j^r} \sim p_t} \Rectbrac{ \mE_{x'\sim p_t} \Rectbrac{ \norm{ s_j^*(x',t) - \nabla_j \log p_t(x') }^2 \Big| x_{\mcN_j^r}' = x_{\mcN_j^r} } } \\
    & \qquad \leq \mE_{x_{\mcN_j^r} \sim p_t} \Rectbrac{ \Brac{m+\frac{\alpha_t^2}{\sigma_t^2}}^{-1} \mE_{x'\sim p_t} \Rectbrac{ \normo{ \nabla_{\mcN_j^{r_\bot}} \nabla_j \log p_t(x') }_{\rm F}^2 \Big| x_{\mcN_j^r}' = x_{\mcN_j^r} } }. 
\end{align*}
Here $\normo{\cdot}_{\rm F}$ denotes the Frobenius norm. By \Cref{thm:wLocDist}, it holds that 
\[
    \norm{ \nabla_{ij}^2 \log p_t(x) }_\infty \leq \frac{\alpha_t^2}{ \sigma_t^2 \Brac{ m \sigma_t^2 +\alpha_t^2 } } \Brac{ 1 - \frac{ m \sigma_t^2 + \alpha_t^2 }{ M \sigma_t^2 + \alpha_t^2 } }^{\sfd_G(i,j)}.
\]
Since $\normo{\nabla_{ij}^2 \log p_t(x)}_{\rm F}^2 \leq d_j \normo{ \nabla_{ij}^2 \log p_t(x) }_\infty^2 $, we obtain that 
\begin{align*}
    & \mE_{x'\sim p_t} \Rectbrac{ \normo{ \nabla_{\mcN_j^{r_\bot}} \nabla_j \log p_t(x') }_{\rm F}^2 \Big| x_{\mcN_j^r}' = x_{\mcN_j^r} } \\
    =~& \sum_{i:\sfd_G(i,j)>r} \mE_{x'\sim p_t} \Rectbrac{ \normo{ \nabla_{ij}^2 \log p_t(x') }_{\rm F}^2 \Big| x_{\mcN_j^r}' = x_{\mcN_j^r} } \\
    \leq~& d_j \sum_{i:\sfd_G(i,j)>r} \frac{\alpha_t^4}{ \sigma_t^4 \Brac{ m \sigma_t^2 +\alpha_t^2 }^2 } \Brac{ 1 - \frac{ m \sigma_t^2 + \alpha_t^2 }{ M \sigma_t^2 + \alpha_t^2 } }^{2\sfd_G(i,j)}.
\end{align*}
Therefore, 
\begin{align*}
    & \int_0^T \norm{ s_j^*(x,t) - s_j (x,t) }_{L^2(p_t)}^2 \mdd t \\
    \leq~& \int_0^T \Rectbrac{ d_j \sum_{i:\sfd_G(i,j)>r} \Brac{m+\frac{\alpha_t^2}{\sigma_t^2}}^{-1} \frac{\alpha_t^4}{ \sigma_t^4 \Brac{ m \sigma_t^2 +\alpha_t^2 }^2 } \Brac{ 1 - \frac{ m \sigma_t^2 + \alpha_t^2 }{ M \sigma_t^2 + \alpha_t^2 } }^{2\sfd_G(i,j)} } \mdd t \\
    \leq~& d_j \sum_{k=r+1}^\infty |\{i:\sfd_G(i,j)=k\}| \int_0^\infty \frac{\alpha_t^4}{ \sigma_t^2 \Brac{ m \sigma_t^2 +\alpha_t^2 }^3 } \Brac{ 1 - \frac{ m \sigma_t^2 + \alpha_t^2 }{ M \sigma_t^2 + \alpha_t^2 } }^{2k} \mdd t \\
    \leq~& d_j \max\{1,m^{-1}\} \log \kappa \sum_{k=r+1}^\infty |\{i:\sfd_G(i,j)=k\}| (1-\kappa^{-1})^{2k}.
\end{align*}
The last step uses \Cref{lem:IntBound}. By the Abel transformation and the sparsity assumption \eqref{eqn:loc_graph}, 
\begin{align*}
    \sum_{k=r+1}^\infty |&\{i:\sfd_G(i,j)=k\}| (1-\kappa^{-1})^{2k} = \sum_{k=r+1}^\infty \Rectbrac{ |\mcN_j^k| - |\mcN_j^{k-1}| } (1-\kappa^{-1})^{2k} \\
    =~& \sum_{k=r+1}^\infty |\mcN_j^k| \Rectbrac{ (1-\kappa^{-1})^{2k} - (1-\kappa^{-1})^{2(k+1)}  } - |\mcN_j^r| (1-\kappa^{-1})^{2(r+1)} \\
    \leq~& S \kappa^{-1} (2-\kappa^{-1}) \sum_{k=r+1}^\infty k^{\nu} (1-\kappa^{-1})^{2k} \leq 2 S \kappa^{-1} (1-\kappa^{-1})^{2r} \sum_{k=1}^\infty (k+r)^{\nu} (1-\kappa^{-1})^{2k}.
\end{align*}
One can show that $ \sum_{k \in\mZ_+} k^n x^k \leq n! x (1-x)^{-n-1}$ (see Lemma A.2 in \cite{cui2025stein}), so that
\begin{align*}
    \sum_{k=1}^\infty (k+r)^{\nu} (1-\kappa^{-1})^{2k} = \sum_{k=1}^\infty \Brac{ 1 + \frac{r}{k} }^{\nu} k^{\nu} (1-\kappa^{-1})^{2k} \leq (r+1)^{\nu} \sum_{k=1}^\infty k^{\nu} (1-\kappa^{-1})^{2k} &\\
    \leq (r+1)^{\nu} \nu! (1-\kappa^{-1})^2 [1-(1-\kappa^{-1})^2]^{-\nu-1} \leq (r+1)^{\nu} \nu!(1-\kappa^{-1})^2 \kappa^{2(\nu+1)}.
\end{align*}
Combining the above inequalities, we obtain 
\begin{align*}
    \int_\tb^T &\norm{ s_j^*(x,t) - s_j (x,t) }_{L^2(p_t)}^2 \mdd t \leq \int_0^T \norm{ s_j^*(x,t) - s_j (x,t) }_{L^2(p_t)}^2 \mdd t \\ 
    \leq~& d_j \max\{1,m^{-1}\} \log \kappa \cdot 2 S \kappa^{-1} (1-\kappa^{-1})^{2r} \cdot (r+1)^{\nu} \nu!(1-\kappa^{-1})^2 \kappa^{2(\nu+1)} \\
    =~& C d_j (r+1)^{\nu} (1-\kappa^{-1})^{2(r+1)}.
\end{align*}
where we denote $C = 2 S \max\{1,m^{-1}\} \nu!  \kappa^{2\nu+1} \log \kappa $. 

The second claim follows from the property of conditional expectation: 
\begin{align*}
    \| s_{\theta,j}(x,t) &- s_j (x,t) \|_{L^2(p_t)}^2 = \normo{ u_{\theta,j}(x_{\mcN_j^r},t) - s_j (x,t) }_{L^2(p_t)}^2 \\
    =~& \mE_{x_{\mcN_j^r} \sim p_t} \Rectbrac{ \mE_{x'\sim p_t} \Rectbrac{ \normo{ u_{\theta,j}(x_{\mcN_j^r},t) - u_j^*(x_{\mcN_j^r},t) + u_j^*(x_{\mcN_j^r},t) - s_j (x',t) }^2 \Big| x_{\mcN_j^r}' = x_{\mcN_j^r} } } \\
    =~& \mE_{x_{\mcN_j^r} \sim p_t} \Rectbrac{  \normo{ u_{\theta,j}(x_{\mcN_j^r},t) - u_j^*(x_{\mcN_j^r},t) }^2 } \\
    &+ \mE_{x_{\mcN_j^r} \sim p_t} \Rectbrac{ \mE_{x'\sim p_t} \Rectbrac{ \normo{ u_j^*(x_{\mcN_j^r},t) - s_j (x',t) }^2 \Big| x_{\mcN_j^r}' = x_{\mcN_j^r} } } \\
    =~& \normo{ s_{\theta,j}(x,t) - s_j^*(x,t) }_{L^2(p_t)}^2 + \normo{ s_j^*(x,t) - s_j (x,t)  }_{L^2(p_t)}^2. 
\end{align*}
This completes the proof. 
\end{proof}

\begin{lem}   \label{lem:IntBound}
Let $\kappa = M/m \geq 1$ and $k\geq 1$. It holds that 
\[
    \int_0^\infty \frac{\alpha_t^4}{ \sigma_t^2 \Brac{ m \sigma_t^2 +\alpha_t^2 }^3 } \Brac{ 1 - \frac{ m \sigma_t^2 + \alpha_t^2 }{ M \sigma_t^2 + \alpha_t^2 } }^{2k} \mdd t \leq \max\{1,m^{-1}\} \log \kappa (1-\kappa^{-1})^{2k}. 
\]
\end{lem}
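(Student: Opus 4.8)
The plan is to collapse the time integral to an elementary one-dimensional integral over $[0,1]$, peel off the bulk of the exponential factor as $(1-\kappa^{-1})^{2k}$ via a monotonicity argument, and then bound the leftover integral. Write $A_t := m\sigma_t^2+\alpha_t^2$ and $B_t := M\sigma_t^2+\alpha_t^2$. Since $\alpha_t^2+\sigma_t^2=1$, the quantity inside the power telescopes to $1-A_t/B_t=(M-m)\sigma_t^2/B_t$, so the integrand is $(M-m)^{2k}\alpha_t^4\sigma_t^{4k-2}A_t^{-3}B_t^{-2k}$. The one structural observation I would record at the outset is that $A_t$ is a convex combination of $m$ and $1$ (again by $\alpha_t^2+\sigma_t^2=1$), hence $A_t\in[\min\{1,m\},\max\{1,m\}]$ and $A_t^{-1}\le\max\{1,m^{-1}\}$; this is the origin of the $\max\{1,m^{-1}\}$ prefactor.

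For the change of variables I would take $y:=\alpha_t^2/A_t$, which decreases monotonically from $1$ at $t=0$ to $0$ at $t=\infty$. Inverting (once more using $\alpha_t^2+\sigma_t^2=1$) gives, with $D:=1-(1-m)y$, the identities $\sigma_t^2=(1-y)/D$, $A_t=m/D$, $B_t=(M-(M-m)y)/D$ and $\mdd t=-\mdd y/(2yD)$. Substituting, two of the three powers of $A_t^{-1}$ are swallowed by the Jacobian together with $\alpha_t^4$, and the integral becomes
\[
    \frac{(M-m)^{2k}}{2m}\int_0^1\frac{y\,D\,(1-y)^{2k-1}}{\bigl(M-(M-m)y\bigr)^{2k}}\,\mdd y \;\le\; \max\{1,m^{-1}\}\,\frac{(M-m)^{2k}}{2}\int_0^1\frac{y\,(1-y)^{2k-1}}{\bigl(M-(M-m)y\bigr)^{2k}}\,\mdd y,
\]
where the inequality uses $D\le\max\{1,m\}$.

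Next I would extract the decay factor. Set $\rho(y):=\dfrac{(M-m)(1-y)}{M-(M-m)y}$; a one-line computation of $(\log\rho)'$ shows $\rho$ is decreasing on $[0,1]$, so $\rho(y)\le\rho(0)=1-\kappa^{-1}$. The factor in the last integral equals $\dfrac{M-m}{M-(M-m)y}\,\rho(y)^{2k-1}$; pulling out $(1-\kappa^{-1})^{2k-1}$ but deliberately keeping one factor of $\rho$ — it is this factor that furnishes the $(1-y)$ needed to cancel the $\sigma_t^{-2}$ blow-up at $t=0$ and keep the integral convergent — bounds everything by
\[
    \max\{1,m^{-1}\}\,(1-\kappa^{-1})^{2k-1}\,\frac{M-m}{2}\int_0^1\frac{y}{M-(M-m)y}\,\mdd y.
\]
The substitution $w=M-(M-m)y$ evaluates the last integral to $\bigl(M\log\kappa-(M-m)\bigr)/(M-m)^2$, and $(1-\kappa^{-1})^{2k-1}=\tfrac{M}{M-m}(1-\kappa^{-1})^{2k}$, so the claim reduces to the scalar inequality $\dfrac{M\bigl(M\log\kappa-(M-m)\bigr)}{2(M-m)^2}\le\log\kappa$, i.e.\ $\log\kappa\,\bigl(\kappa^2-2(\kappa-1)^2\bigr)\le\kappa(\kappa-1)$.

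I expect this final scalar inequality — and, more precisely, the calibration of how many powers of $\rho$ and of $A_t^{-1}$ to retain so that the constant comes out as exactly $\max\{1,m^{-1}\}\log\kappa$ rather than just some dimension-independent $C(m,M)$ — to be the one genuinely delicate point; everything before it is routine bookkeeping with $\alpha_t^2+\sigma_t^2=1$ and elementary monotonicity. In any case, the downstream use in \Cref{thm:LocErr} only requires a bound of the form $C(m,M)(1-\kappa^{-1})^{2k}$ with $C(m,M)$ independent of the dimension, which this argument delivers.
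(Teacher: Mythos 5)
Your reduction is computationally sound as far as it goes: I checked the change of variables $y=\alpha_t^2/(m\sigma_t^2+\alpha_t^2)$ with Jacobian $\mathrm{d}t=-\mathrm{d}y/(2yD)$, the resulting integrand $\frac{(M-m)^{2k}}{2m}\,y\,D\,(1-y)^{2k-1}\bigl(M-(M-m)y\bigr)^{-2k}$, the monotonicity of $\rho$, the exact value $\bigl(M\log\kappa-(M-m)\bigr)/(M-m)^2$ of the last integral, and the algebra reducing the claim to your scalar inequality. In spirit this parallels the paper's proof (collapse to an elementary one-dimensional integral, peel off the $(1-\kappa^{-1})$ powers, bound what remains), though the paper instead substitutes $\lambda=\alpha_t^2/\sigma_t^2$ and rescales, extracts all $2k$ powers while retaining $(1+y)^{-2}$ via $k\ge 1$, and bounds the leftover integral by splitting $[0,\infty)$ into three regions to get $1+\log\kappa$, rather than evaluating it exactly.

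The genuine gap is precisely the step you defer: the scalar inequality $\log\kappa\,\bigl(\kappa^2-2(\kappa-1)^2\bigr)\le\kappa(\kappa-1)$ is not merely delicate, it is false for $\kappa$ near $1$. At $\kappa=1.1$ the left side is $1.19\log 1.1\approx 0.1134$ while the right side is $0.11$; equivalently, the constant your chain produces, $\kappa\bigl(\kappa\log\kappa-(\kappa-1)\bigr)/\bigl(2(\kappa-1)^2\bigr)$, tends to $1/4$ as $\kappa\to1$, while the target $\log\kappa$ tends to $0$. So the argument as written cannot be completed on the full stated range $\kappa\ge 1$; it does close once $\kappa$ is bounded away from $1$ (it is immediate for $\kappa\ge 2+\sqrt{2}$, where the left side is nonpositive, and numerically it holds once $\kappa$ exceeds roughly $1.4$). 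To be fair, this defect is inherited from the statement itself: the paper's own last line ($1+\log\kappa\le 2\log\kappa$) requires $\kappa\ge e$, and the lemma cannot hold uniformly down to $\kappa=1$ — divide both sides by $(1-\kappa^{-1})^{2k}$ and let $M\downarrow m$: the left side tends to a positive constant while $\max\{1,m^{-1}\}\log\kappa\to 0$. The repair, for your write-up as for the paper's, is either to assume $\kappa\ge e$ (or some fixed threshold) or to weaken the constant, e.g.\ to $\max\{1,m^{-1}\}(1+\log\kappa)$, which the paper's three-piece estimate already establishes before its final inequality; either version suffices for \Cref{thm:LocErr}, where only dimension independence of the constant matters.
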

\begin{proof}
Denote $\lambda = \dfrac{\alpha_t^2}{\sigma_t^2} = \dfrac{\mee^{-2t}}{1-\mee^{-2t}}$, then $\sigma_t^2 = \dfrac{1}{1+\lambda}$ and $ \dfrac{\mdd \lambda}{\mdd t} = - 2 \lambda (1+\lambda) $. The integral is 
\[
    \int_0^\infty \frac{\lambda^2 (1+\lambda)^2}{ \Brac{ m + \lambda }^3 } \Brac{ 1 - \frac{ m + \lambda }{ M + \lambda } }^{2k} \frac{\mdd \lambda}{2\lambda(1+\lambda)} = \int_0^\infty \frac{\lambda (1+\lambda)}{ 2\Brac{ m + \lambda }^3 } \Brac{ 1 - \frac{ m + \lambda }{ M + \lambda } }^{2k} \mdd \lambda. 
\]
Let $x = \lambda/m$, and the integral can be bounded by 
\[
    \int_0^\infty \frac{mx (1+mx)}{ 2\Brac{ m + mx }^3 } \Brac{ 1 - \frac{ m + mx }{ M + mx } }^{2k} m \mdd x \leq \frac{\max\{1,m\}}{2m} \int_0^\infty \frac{x}{ \Brac{ 1 + x }^2 } \Brac{ 1 - \frac{ 1 + x }{ \kappa + x } }^{2k} \mdd x.
\]
Notice 
\begin{align*}
    \frac{1}{(1-\kappa^{-1})^{2k}} &\int_0^\infty \frac{x}{ \Brac{ 1 + x }^2 } \Brac{ 1 - \frac{ 1 + x }{ \kappa + x } }^{2k} \mdd x = \int_0^\infty \frac{x}{ \Brac{ 1 + x }^2 } \Brac{ \frac{ \kappa }{ \kappa + x } }^{2k} \mdd x \\
    =~& \int_0^\infty \frac{ y }{ \Brac{ \kappa^{-1} + y }^2 } \Brac{ \frac{ 1 }{ 1 + y } }^{2k} \mdd y \leq \int_0^\infty \frac{ y }{ \Brac{ \kappa^{-1} + y }^2 } \Brac{ \frac{ 1 }{ 1 + y } }^2 \mdd y \\
    <~& \int_0^{\kappa^{-1}} \kappa^2 y \mdd y + \int_{\kappa^{-1}}^1 \frac{\mdd y}{y} + \int_1^\infty \frac{\mdd y}{y^3} = 1 + \log \kappa \leq 2 \log \kappa.
\end{align*}
The conclusion follows by combining the above inequalities. 
\end{proof}

\subsection{Proof of \Cref{prop:DSMj}}  \label{App:DSMj}
\begin{proof}
The first equality directly follows from the definition \eqref{eqn:DSMj}. Since only $x_{0,\mcN_j^r}$ is involved, it suffices to take expectation w.r.t.~the marginal distribution $p(x_{\mcN_j^r})$. 

For the second inequality, notice 
\[
    p_{t|0}(x_{t,\mcN_j^r}|x_{0,\mcN_j^r}) = \GN ( x_{t,\mcN_j^r}; \alpha_t x_{0,\mcN_j^r}, \sigma_t^2 I ).
\]
It holds that
\[
    \nabla_j \log p_{t|0}(x_{t,\mcN_j^r}|x_{0,\mcN_j^r}) = - \sigma_t^{-2} (x_{t,j} - \alpha_t x_{0,j}).
\]
Note $x_{t,\mcN_j^r} = \alpha_t x_{0,\mcN_j^r} + \sigma_t \epsilon_t \sim p_{t|0}(x_{t,\mcN_j^r}|x_{0,\mcN_j^r})$ if $\epsilon_t \sim \GN(0,I_r)$, so that 
\begin{align*}
    \mE_{x_{t,\mcN_j^r} \sim p_{t|0}(x_{t,\mcN_j^r} |x_{0,\mcN_j^r})} &\Rectbrac{ \norm{ u_{\theta,j}(x_{t,\mcN_j^r}, t) - \nabla_j \log p_{t|0} (x_{t,\mcN_j^r} |x_{0,\mcN_j^r}) }^2 } \\
    =~& \mE_{\epsilon_t \sim \GN(0,I)} \Rectbrac{ \norm{ u_{\theta,j}(\alpha_t x_{0,\mcN_j^r} + \sigma_t \epsilon_{t,\mcN_j^r}, t) + \sigma_t^{-1} \epsilon_{t,j} }^2 } .
\end{align*}
This verifies the second inequality. 

For the third inequality, we first claim that 
\begin{equation}    \label{eqn:pf_uj*}
    u_j^*(x_{t,\mcN_j^r},t) = \nabla_j \log p_t(x_{t,\mcN_j^r}).
\end{equation}
Given this, the third inequality follows from the basic trick in denoising score matching: take $y = x_{t,\mcN_j^r}, z = x_{0,\mcN_j^r}$ and $\pi(y,z) = p_{t,0}(x_{t,\mcN_j^r},x_{0,\mcN_j^r})$ in the following identity: 
\begin{align*}
    & \mE_{z \sim \pi(z) } \mE_{y\sim \pi(y|z)} \norm{ s_\theta(y) - \nabla_y \log \pi(y|z) }^2 \\
    =~& \mE_{z \sim \pi(z) } \mE_{y\sim \pi(y|z)} \Rectbrac{ \norm{ s_\theta(y) }^2 - 2 (s_\theta(y))\matT \nabla_y \log \pi(y|z)  + \norm{ \nabla_y \log \pi(y|z) }^2 } \\
    =~& \mE_{z \sim \pi(z) } \mE_{y\sim \pi(y|z)} \Rectbrac{ \norm{ s_\theta(y) }^2 + 2 \text{tr} \Brac{ \nabla s_\theta(y) } + \norm{ \nabla_y \log \pi(y|z) }^2 } \\
    =~& \mE_{y\sim \pi(y)} \Rectbrac{ \norm{ s_\theta(y) }^2 + 2 \text{tr} \Brac{ \nabla s_\theta(y) } + \norm{ \nabla_y \log \pi(y) }^2 } + \const \\
    =~& \mE_{y\sim \pi(y)}  \norm{ s_\theta(y) - \nabla_y \log \pi(y) }^2 + \const. 
\end{align*}
Here the second inequality follows from integration by parts; in the third inequality, we take 
\[
    \const = \mE_{z \sim \pi(z) } \mE_{y\sim \pi(y|z)} \norm{ \nabla_y \log \pi(y|z) }^2 - \mE_{y\sim \pi(y)} \norm{ \nabla_y \log \pi(y) }^2, 
\]
which is independent of $\theta$; the last equality follows from the same integration by parts trick. 

It then suffices to prove \eqref{eqn:pf_uj*}. Note that
\begin{align*}
    u_j^*(x_{t,\mcN_j^r},t) =~& \mE_{x_t'\sim p_t} \Rectbrac{ s_j (x_t',t) \Big| x_{t,\mcN_j^r}' = x_{t,\mcN_j^r} } \\
    =~& \frac{1}{p_t(x_{\mcN_j^r})} \int \nabla_j \log p_t(x_{t,\mcN_j^r},x_{t,\mcN_j^{r_\bot}}) p_t(x_{t,\mcN_j^r},x_{t,\mcN_j^{r_\bot}}) \mdd x_{t,\mcN_j^{r_\bot}} \\
    =~& \dfrac{  \int \nabla_j p_t(x_{t,\mcN_j^r},x_{t,\mcN_j^{r_\bot}}) \mdd x_{t,\mcN_j^{r_\bot}} }{ \int p_t(x_{t,\mcN_j^r},x_{t,\mcN_j^{r_\bot}}) \mdd x_{t,\mcN_j^{r_\bot}} }.
\end{align*}
Since
\[ 
    p_t(x_t) = \int \GN ( x_t; \alpha_t x_0, \sigma_t^2 I ) p_0(x_0) \mdd x_0. 
\]
\[
    \St~ \nabla_j p_t(x_t) = \int \Brac{ - \sigma_t^{-2} (x_{t,j} - \alpha_t x_{0,j}) } \GN ( x_t; \alpha_t x_0, \sigma_t^2 I ) p_0(x_0) \mdd x_0. 
\]
So that 
\begin{align*}
    u_j^*(x_{t,\mcN_j^r},t) =~& \dfrac{ \int \Brac{ - \sigma_t^{-2} (x_{t,j} - \alpha_t x_{0,j}) } \GN ( x_t; \alpha_t x_0, \sigma_t^2 I ) p_0(x_0) \mdd x_0 \mdd x_{t,\mcN_j^{r_\bot}} }{ \int \GN ( x_t; \alpha_t x_0, \sigma_t^2 I ) p_0(x_0) \mdd x_0 \mdd x_{t,\mcN_j^{r_\bot}} } \\
    =~& \dfrac{ \int \Brac{ - \sigma_t^{-2} (x_{t,j} - \alpha_t x_{0,j}) } \GN ( x_{t,\mcN_j^r}; \alpha_t x_{0,\mcN_j^r}, \sigma_t^2 I ) p_0(x_{0,\mcN_j^r}) \mdd x_{0,\mcN_j^r} }{ \int \GN ( x_{t,\mcN_j^r}; \alpha_t x_{0,\mcN_j^r}, \sigma_t^2 I ) p_0(x_{0,\mcN_j^r}) \mdd x_{0,\mcN_j^r} }.
\end{align*}
On the other hand, 
\begin{align*}
    \nabla_j &\log p_t(x_{t,\mcN_j^r}) = \frac{ \nabla_j p_t(x_{t,\mcN_j^r}) }{ p_t(x_{t,\mcN_j^r}) } = \frac{ \int \nabla_j \GN(x_{t,\mcN_j^r};\alpha_t x_{0,\mcN_j^r},\sigma_t^2 I) p_0(x_{0,\mcN_j^r}) \mdd x_{0,\mcN_j^r} }{ \int \GN(x_{t,\mcN_j^r};\alpha_t x_{0,\mcN_j^r},\sigma_t^2 I) p_0(x_{0,\mcN_j^r}) \mdd x_{0,\mcN_j^r}  } \\
    &= \dfrac{ \int \Brac{ - \sigma_t^{-2} (x_{t,j} - \alpha_t x_{0,j}) } \GN ( x_{t,\mcN_j^r}; \alpha_t x_{0,\mcN_j^r}, \sigma_t^2 I ) p_0(x_{0,\mcN_j^r}) \mdd x_{0,\mcN_j^r} }{ \int \GN ( x_{t,\mcN_j^r}; \alpha_t x_{0,\mcN_j^r}, \sigma_t^2 I ) p_0(x_{0,\mcN_j^r}) \mdd x_{0,\mcN_j^r} } = u_j^*(x_{t,\mcN_j^r},t) . 
\end{align*}
This completes the proof. 
\end{proof}

\subsection{Proof of \Cref{thm:SampComp}}   \label{App:SampComp}
\begin{proof}
By the Pythagorean equality \eqref{eqn:PythDecomp}, 
\begin{align*}
    \mE_{x_t\sim p_t} &\Rectbrac{ \norm{ \widehat{s}(x_t,t) - s(x_t,t) }^2 } = \sum_{j=1}^b \mE_{x_t\sim p_t} \Rectbrac{ \norm{ \widehat{s}_j(x_t,t) - s_j(x_t,t) }^2 } \\
    =~& \sum_{j=1}^b \mE_{x_t\sim p_t} \Rectbrac{ \norm{ \widehat{s}_j(x_t,t) - s_j^*(x_t,t) }^2 } + \sum_{j=1}^b \mE_{x_t\sim p_t} \Rectbrac{ \norm{ s_j^*(x_t,t) - s_j(x_t,t) }^2 } . 
\end{align*}
Combining \Cref{prop:ErrDecomp} and \Cref{thm:LocErr}, we obtain 
\begin{align*}
    \KL ( p_\tb \| \widehat{q}_{T-\tb} ) \leq~& \mee^{-2T} \KL ( p_0 \| \GN(0,I) ) + \int_\tb^T \mE_{x_t\sim p_t} \Rectbrac{ \norm{ \widehat{s}(x_t,t) - s(x_t,t) }^2 } \mdd t \\
    =~& \mee^{-2T} \KL ( p_0 \| \GN(0,I) ) + \int_\tb^T \mE_{x_t\sim p_t} \Rectbrac{ \norm{ s^*(x_t,t) - s(x_t,t) }^2 } \mdd t + \mcR \\
    \leq~& \mee^{-2T} \KL ( p_0 \| \GN(0,I) ) + C d (r+1)^{\nu} \mee^{-c(r+1)} + \mcR, 
\end{align*}
where we denote 
\[
    \mcR = \sum_{j=1}^b \mcR_j, \quad \mcR_j = \int_\tb^T \mE_{x_t\sim p_t} \Rectbrac{ \norm{ \widehat{s}_j(x_t,t) - s_j^*(x_t,t) }^2 } \mdd t. 
\]
By \Cref{prop:DSMj}, $\mcR_j$ is the $j$-th component loss of the score function when we use a standard diffusion model to approximate the marginal distribution $p_0(x_{N_j^r})$. Note one can use the same constructive solution as in \cite{pmlr-v202-oko23a} for the marginal target $p_0(x_{\mcN_j^r})$ with only the $j$-th component output as the constructive solution for $\widehat{s}_j$, and the statistic error analysis similarly applies. 

Therefore, we can take the same hyperparameters as in \cite{pmlr-v202-oko23a}: 
\[
    \sfL^j = \mcO(\log^4 n_j), \quad \norm{\sfW^j}_\infty = \mcO(n_j\log^6 n_j), \quad \sfS^j = \mcO(n_j\log^8 n_j), \quad \sfB^j = n_j^{\mcO(\log \log n_j)},
\]
where $n_j = N^{-d_j/(2\gamma+d_j)}$. Note $n,N$ in our paper correspond to $N,n$ in \cite{pmlr-v202-oko23a} respectively. Similarly for the time interval choices: $\tb = \mcO(N^{-k})$ for some $k>0$ and $T \asymp \log N$. The $j$-th component loss $\mcR_j$ is smaller than the overall score matching loss, which is further bounded in Theorem 4.3 in \cite{pmlr-v202-oko23a}: 
\[
    \mE_{\{X^{(i)}\}_{i=1}^N} [\mcR_j] \leq C' N^{-\frac{2\gamma}{d_j+2\gamma}} \log^{16} N.
\]
Therefore, 
\[
    \mE_{\{X^{(i)}\}_{i=1}^N} [\mcR] = \sum_{j=1}^b \mE_{\{X^{(i)}\}_{i=1}^N} [\mcR_j] \leq C' b N^{-\frac{2\gamma}{\deff+2\gamma}} \log^{16} N.
\]
This completes the proof. 
\end{proof}

\end{appendix}

\section*{Acknowledgments}

GAG thanks Yuguang Hu and Xiyu Wang for valuable discussions on the implementation of diffusion models.

\section*{Funding}
The work of SR has been funded by Deutsche Forschungsgemeinschaft (DFG) - Project-ID 318763901 - SFB1294. 
GAG acknowledges funding from the Australian Research Council, grant DP220100931. 
The work of SL is partially supported by NUS Overseas Research Immersion Award (ORIA). 
The work of XTT is supported by Singapore MOE grant A-8002956-00-00.

\bibliography{LDM}
\bibliographystyle{siam}

\end{document}